\newif\ifarxiv
	\newcommand{\mymathbold}{\symbf}%
	\newcommand{\mymathbold}{\bm}%
\newcommand{\scrbar}[1]{\overline{\mathcal{#1}}}
\newtheorem{proposition}{Proposition}
\newtheorem{theorem}{Theorem}
\newtheorem{assumption}{Assumption}
\newtheorem{lemma}{Lemma}
\newtheorem{definition}{Definition}
\DeclareMathOperator{\E}{\mathbb{E}}
\renewcommand{\P}{\operatorname{\mathbb{P}}}
\newcommand{\diag}{\operatorname{diag}}
\DeclarePairedDelimiter{\norm}{\lVert}{\rVert}
\DeclarePairedDelimiter{\abs}{\lvert}{\rvert}
\DeclarePairedDelimiter{\braces}{\{}{\}}
\DeclarePairedDelimiter{\parens}{(}{)}
\DeclarePairedDelimiter{\brackets}{[}{]}
\DeclarePairedDelimiterX{\ip}[2]{\langle}{\rangle}{#1,#2}
\DeclarePairedDelimiterXPP{\normsub}[2]{}{\lVert}{\rVert}{_{#2}}{#1}
\DeclarePairedDelimiterXPP{\ipsub}[3]{}{\langle}{\rangle}{_{#3}}{#1,#2}
\DeclarePairedDelimiterXPP{\ipHS}[2]{}{\langle}{\rangle}{_{\mathrm{HS}}}{#1, #2}
\DeclarePairedDelimiterXPP{\normHS}[1]{}{\lVert}{\rVert}{_{\mathrm{HS}}}{#1}
\DeclarePairedDelimiterXPP{\ipF}[2]{}{\langle}{\rangle}{_{\mathrm{F}}}{#1, #2}
\DeclarePairedDelimiterXPP{\normF}[1]{}{\lVert}{\rVert}{_{\mathrm{F}}}{#1}
\DeclarePairedDelimiterXPP{\normt}[1]{}{\lVert}{\rVert}{_{2}}{#1}
\DeclarePairedDelimiterXPP{\normo}[1]{}{\lVert}{\rVert}{_{\mathrm{1}}}{#1}
\DeclarePairedDelimiterXPP{\dkl}[2]{\operatorname{D_{KL}}}{(}{)}{}{#1 \: \delimsize\Vert \: #2}
\DeclarePairedDelimiterXPP{\restr}[2]{}{{}}{\vert}{_{#2}}{#1}
\newcommand{\R}{\mathbb{R}}
\newcommand{\simiid}{\overset{\mathclap{\text{i.i.d.}}}{\sim}}
\newcommand{\ut}{\blu^{(t)}}
\newcommand{\vt}{\blv^{(t)}}
\newcommand{\utt}{\blu^{(t+1)}}
\newcommand{\vtt}{\blv^{(t+1)}}
\title{Precise asymptotics of reweighted least-squares algorithms for linear diagonal networks}
\author{%
  Chiraag Kaushik\thanks{School of Electrical and Computer Engineering, Georgia Institute of Technology.}
  \and Justin Romberg\footnotemark[1]
  \and Vidya Muthukumar\footnotemark[1] \thanks{School of Industrial \& Systems Engineering, Georgia Institute of Technology.}}
\author{%
  David S.~Hippocampus\thanks{Use footnote for providing further information
    about author (webpage, alternative address)---\emph{not} for acknowledging
    funding agencies.} \\
  Department of Computer Science\\
  Cranberry-Lemon University\\
  Pittsburgh, PA 15213 \\
  \texttt{hippo@cs.cranberry-lemon.edu} \\
  % examples of more authors
  % \And
  % Coauthor \\
  % Affiliation \\
  % Address \\
  % \texttt{email} \\
  % \AND
  % Coauthor \\
  % Affiliation \\
  % Address \\
  % \texttt{email} \\
  % \And
  % Coauthor \\
  % Affiliation \\
  % Address \\
  % \texttt{email} \\
  % \And
  % Coauthor \\
  % Affiliation \\
  % Address \\
  % \texttt{email} \\
}
\begin{document}
\maketitle
\begin{abstract}
The classical \textit{iteratively reweighted least-squares} (IRLS) algorithm aims to recover an unknown signal from linear measurements by performing a sequence of weighted least squares problems, where the weights are recursively updated at each step. Varieties of this algorithm have been shown to achieve favorable empirical performance and theoretical guarantees for sparse recovery and $\ell_p$-norm minimization. 
% More recently, the ability of neural networks to leverage low-dimensional structure during training has generated considerable interest in the phenomenon of \textit{feature learning} and the implicit bias of iterative algorithms in overparameterized and non-convex settings. 
Recently, some preliminary connections have also been made between IRLS and certain types of non-convex linear neural network architectures that are observed to exploit low-dimensional structure in high-dimensional linear models. 
In this work, we provide a unified asymptotic analysis for a family of algorithms that encompasses IRLS, the recently proposed lin-RFM algorithm (which was motivated by feature learning in neural networks), and the alternating minimization algorithm on linear diagonal neural networks. Our analysis operates in a ``batched'' setting with i.i.d. Gaussian covariates and shows that, with appropriately chosen reweighting policy, the algorithm can achieve favorable performance in only a handful of iterations. We also extend our results to the case of group-sparse recovery and show that leveraging this structure in the reweighting scheme provably improves test error compared to coordinate-wise reweighting.
\end{abstract}

\section{Introduction}
Many high-dimensional machine learning and signal processing tasks rely on solving optimization problems with regularizers that explicitly enforce certain structure on the learned parameters. 
The traditional formulation for such tasks involves a regularized empirical risk minimization (ERM) problem of the form
\begin{equation}\label{eq:ERM}
\min_{\bltheta \in \R^d} L(\bltheta) + \lambda R(\bltheta),
\end{equation}
where $L(\cdot)$ is a loss function that encourages fidelity to the observed training data and $R(\cdot)$ encodes desirable structural properties. 
In many important applications, it is desirable to obtain a sparsity-seeking solution; in such cases, the regularizer is typically non-smooth, as in the LASSO, group LASSO, and nuclear norm regularizers. As an alternative approach to this non-smooth optimization, several recent works have proposed the ``Hadamard over-parameterization'' of $\bltheta$ into the entry-wise product of two factors $\blu \odot \blv$. While the resulting minimization problem is non-convex, this parameterization, coupled with a \textit{smooth} regularizer, has been shown to achieve competitive empirical performance (in terms of numerical stability, robustness, and convergence rate)  when compared to traditional sparse recovery algorithms \cite{hoff2017lasso, poon2021smooth}. For example, rather than solving the convex, but non-smooth LASSO (where $L$ is the squared loss and $R$ is the $\ell_1$ norm), the Hadamard reparameterization yields the following non-convex and smooth formulation:
\begin{equation}\label{eq:ldnnparam}
\min_{\blu, \blv \in \R^d} L(\blu \odot \blv) + \frac{\lambda}{2}\parens{\norm{\blu}_2^2 + \norm{\blv}_2^2}.
\end{equation}

In the case where the regression function is linear in $\bltheta$, solving (\ref{eq:ldnnparam}) is equivalent to learning a function of the form 
\[
f_{\blu, \blv}(\blx) = \ip{\blx}{\blu \odot \blv} = \ip{\diag(\blv)\blx}{\blu},
\]
which can be thought of as a one hidden layer neural network with linear activation function and inner weight matrix $\diag(\blv)$. In this context, this \textit{linear diagonal neural network (LDNN)} architecture has also been studied as an illustrative case study to improve our understanding of how neural networks perform iterative \textit{``feature learning''} to leverage low-dimensional structure in high-dimensional settings \cite{woodworth2020kernel, pesme2021implicit}. 

One way to understand the connection between classical sparse recovery algorithms and the Hadamard product/LDNN form in (\ref{eq:ldnnparam}) is to consider the change of variable $v_i \to \sqrt{\eta_i}$ and $u_i \to \frac{\theta_i}{\sqrt{\eta_i}}$ \cite{poon2021smooth}. This yields the following optimization problem, which is jointly convex in $\bleta$ and $\bltheta$:
\begin{equation}\label{eq:irlsparam}
\min_{\bltheta \in \R^d} \min_{\bleta \in \R^d_+} L(\bltheta) + \frac{\lambda}{2} \sum_{i=1}^d \parens*{\frac{\theta_i^2}{\eta_i} + \eta_i}.
\end{equation}
After solving the minimum over $\bleta$ explicitly, the second term becomes exactly $\lambda \norm{\bltheta}_1$, and we recover the Lasso objective. This is a special case of the so-called ``eta-trick'' \cite{bachblog}, which can be used to write many common sparsity-inducing penalties as the minimization of a quadratic functional of $\bltheta$.

 A variety of algorithms for learning Hadamard product parameterizations have recently been studied, including alternating minimization \cite{hoff2017lasso}, bi-level optimization \cite{poon2021smooth}, and joint gradient descent on $(\blu, \blv)$ \cite{woodworth2020kernel}. The connection to the $(\bltheta, \bleta)$ optimization in (\ref{eq:irlsparam}) can also be leveraged to construct algorithms based on classical sparse recovery techniques. In particular, alternating minimization over $\bltheta$ and $\bleta$ in (\ref{eq:irlsparam}) yields the popular \textit{iteratively reweighted least-squares (IRLS)} algorithm \cite{daubechies2010iteratively, chartrand2008iteratively}.  Translating these updates to the equivalent updates on $(\blu, \blv)$, we obtain an iterative least-squares algorithm for LDNNs, which alternately sets $\vtt = \sqrt{|\ut \odot \vt|}$ and performs a weighted least squares update on $\blu$. This particular form of reparameterized IRLS was generalized in \cite{radhakrishnan2024linear} to a larger family of iterative least-squares algorithms under the name of \textit{linear recursive feature machines (lin-RFM)}.

While several methods for learning Hadamard/LDNN parameterizations have been introduced in the literature, there remain many open questions about how they each perform and how they compare. Theoretical analyses of these algorithms typically assume fixed, possibly worst-case training data, and aim to characterize the properties of the fixed-points \cite{radhakrishnan2024linear, woodworth2020kernel} or give convergence guarantees to second-order stationary points \cite{poon2021smooth}. However, these worst-case analyses do not readily yield guarantees on the estimation error, which is the principal metric of interest. Indeed, many works have shown that studying the \textit{average-case}, or typical, behavior of non-convex optimization algorithms can allow for estimation guarantees that are more precise and reflective of practice~\cite{jain2013low, loh2011high, chandrasekher2023sharp}.

In this paper, we provide a precise analysis of a general family of iterative algorithms for learning LDNNs that take the form
\begin{equation*}
    \begin{aligned}
    \utt &= \arg\min_{\blu \in \R^d} \frac{1}{n}\norm*{\bly^{(t)} - \frac{1}{\sqrt{d}}\blX^{(t)} (\blu \odot \vt)}_2^2 + \frac{\lambda}{d}\norm{\blu}_2^2\\
    \vtt &= \psi(\utt, \vt),
    \end{aligned}
\end{equation*}

for some reweighting function $\psi$. As we show in Section \ref{sec:setup}, this formulation encompasses multiple existing algorithms, including reparameterized IRLS, lin-RFM, and alternating minimization over $\blu$ and $\blv$. We consider the common scenario where training is performed with batches of data and characterize the \textit{exact} distribution of the parameters after each iteration in the high-dimensional limit $(n,d) \to \infty$. This allows us to address questions such as
\begin{itemize}
    \item How do different algorithm choices compare (in terms of convergence and signal recovery) in the high-dimensional regime?
    \item How many iterations does it take common algorithms to find statistically favorable solutions?
    \item What is the effect of \textit{model architecture} in LDNNs? Does leveraging group structure provably improve sample complexity when the ground-truth signal is group-sparse?
\end{itemize}

\paragraph{Contributions: } 
We define a general class of algorithms which learns LDNNs by alternately performing least-squares and reweighting steps in a sample-split/batched setting, and we show the following.

\textbf{(1)} Under mild assumptions on the target signal, initialization, and reweighting function, we provide an exact characterization of the distribution of the entries of the parameters at each iteration in the limit as $n, d$ approach infinity.  

\textbf{(2)} We show that this asymptotic result aligns well with numerical simulations and allows for accurate prediction of the test error at each iteration. This enables rigorous comparison between different algorithms and demonstrates that, with appropriate reweighting schemes, a statistically favorable solution can be obtained in only a handful of iterations. 

\textbf{(3)} Lastly, we extend our asymptotic framework to a setting of \textit{structured sparsity}, where $\bltheta^*$ has group-sparse structure. Our results show that using a grouped Hadamard parameterization (i.e., tying together groups of weights in the LDNN) effectively learns such signals, with performance scaling with the number of non-zero groups, rather than the total sparsity level. 

\subsection{Related work}
% We divide our discussion of prior work into three parts, corresponding to classical IRLS algorithms (and the quadratic variational formulation introduced above), the Hadamard/LDNN parameterization, and precise analyses of non-convex algorithms in the high-dimensional regime.

\textbf{IRLS and the $\eta$-trick:} The reformulation of non-smooth regularizers in terms of quadratic variational forms (the ``$\eta$-trick'') has been studied in various early works in computer vision and robust statistics \cite{geman1992constrained, black1996unification}. Further analysis and examples of sparsity-promoting norms are provided in \cite{micchelli2013regularizers, bach2012optimization}, and \cite{poon2021smooth} provides a characterization of when a regularizer admits a variational form of this type. The resultant optimization algorithm is iteratively-reweighted least-squares (IRLS), a popular technique for compressive sensing and sparse recovery \cite{daubechies2010iteratively, chartrand2008iteratively}. These works also consider IRLS algorithms corresponding to $\ell_p$-norm regularization for $0<p<1$; in this case, the minimization is no longer convex, but \cite{daubechies2010iteratively} shows that such methods can find sparse solutions with fast local convergence rate. 
% The same technique can be extended to matrix-valued penalties, such as the nuclear norm, and \cite{mohan2012iterative, fornasier2011low} propose versions of IRLS for low-rank matrix recovery. 
The family of algorithms we consider includes a reparameterized version of each of these IRLS algorithms, but unlike these prior works, we consider a batched setting and the high-dimensional asymptotic regime. Moreover, our results apply to other algorithms which may not be easily expressed as resulting from the $\eta$-trick.

\noindent
\textbf{Hadamard parameterization and linear diagonal networks: } The reparameterization of $\bltheta$ into the product of factors $\blu \odot \blv$ has been considered in a variety of recent works. The authors of \cite{vaskevicius2019implicit, zhao2022high} show that early-stopped joint gradient descent over the two factors can lead to optimal sample complexity for sparse linear regression. The equivalence of this parameterization to LDNNs has also led to a surge of interest in the \textit{implicit bias} of gradient descent/flow on this parameterization, i.e., a characterization of which solution gradient descent will reach without explicit regularization (corresponding to $\lambda = 0$). These works typically consider gradient flow run until completion and characterize the solution as a minimizer of a certain sparsity-inducing functional that depends on the initialization \cite{woodworth2020kernel, chou2023more, pesme2021implicit}. 

The connection between the LASSO (as well as some non-convex $\ell_q$ penalties) and the Hadamard parameterization was studied in \cite{hoff2017lasso}, where alternating minimization over the two factors is used instead of first-order methods. More recently, \cite{poon2021smooth} extends these observations by making explicit the connection to the $\eta$-trick and showing that saddle points are strict (escapable). These insights lead to global convergence guarantees and a smooth bi-level optimization scheme \cite{poon2021smooth, poon2023smooth} for non-smooth structured optimization problems that was shown to perform competitively with state-of-the-art solvers. The non-convex landscape of such formulations is further explored in \cite{kolb2023smoothing}, where it is shown that for a large class of parameterizations (including grouped, deep, and fractional Hadamard products), the non-convex problem has no spurious local minima. 
% See \cite{kolb2023smoothing} also for a more comprehensive survey of works which study the Hadamard parameterization. 
Motivated by the type of feature learning observed in neural networks, the authors of \cite{radhakrishnan2024linear} propose lin-RFM, which updates one of the parameters via weighted least-squares while iteratively updating the other parameter via a reweighting scheme based on the average gradient outer product of the learned function. The authors characterize properties of the fixed-points and show that, for certain reweighting schemes, lin-RFM is equivalent to a reparameterization of IRLS. The family of algorithms we consider is similar, consisting of a weighted least-squares step and a reweighting step; however, it is more general and doesn't require the reweighting function to have the particular form required by lin-RFM. Moreover, our asymptotic characterization of the iterates allows for a precise understanding of how the test error evolves. 
On the other hand, our analysis relies on batching/sample-splitting of training data while all of the above works reuse the entire batch of training data at each iteration.

We make particular note here of the few works which explicitly consider a ``grouped'' Hadamard parameterization, which we consider in Section \ref{sec:group}. This corresponds to a LDNN with groups of tied weights in the hidden layer. Early stopped gradient flow/descent for this type of architecture was shown in \cite{li2023implicit} to achieve sample-complexity scaling with the number of non-zero groups (rather than the overall sparsity). The non-convex landscape for this grouped architecture is studied in \cite{ziyin2023spred} and \cite{kolb2023smoothing}. Our results complement these works by studying group-reweighted least-squares algorithms (rather than gradient methods) for learning functions of this form.

\noindent
\textbf{Precise characterization of higher-order non-convex optimization problems: } On a technical level, our work provides a precise deterministic characterization of a family of higher-order optimization algorithms. In this sense, our results are of a similar flavor to \cite{chandrasekher2023sharp}, where Gaussian comparison inequalities are used to obtain a precise characterization of non-convex optimization problems. However, since the Hadamard parameterization is a re-parameterization of the actual estimator of interest ($\bltheta \coloneqq \blu \odot \blv$), the results of \cite{chandrasekher2023sharp} are not directly applicable. While our results are asymptotic and do not provide finite-sample guarantees, we provide a \textit{distributional} characterization of $\blv$ after each reweighting step, which allows us to characterize the behavior of more complicated functions of the iterates. Precise characterizations of alternating minimization and lin-prox methods for rank-1 matrix sensing are studied in the works \cite{chandrasekher2022alternating, lou2024hyperparameter}. While these works obtain non-asymptotic guarantees, the estimation model and resulting optimization objective is quite different, with each unknown parameter interacting with independent sensing vectors (rather than a single sensing vector interacting with the product of the two parameters). 
% However, since we are interested the factors $\blu$ and $\blv$ do not interact with independent random vectors, the same techniques do not easily apply. 

\section{Background and formulation}\label{sec:setup}
\textbf{Notation:} The ones vector of dimension $d$ is denoted as $\mathbf{1}_d$. We denote the element-wise multiplication (Hadamard product) of two vectors $\blx$ and $\bly$ as $\blx \odot \bly$. Element-wise division of two vectors is denoted as $\frac{\blx}{\bly}$. We say a function $f \colon \R^p \to \R$ is \textit{pseudo-Lipschitz} of order $k$ if, for all $\blx, \bly \in \R^p$, 
\[
|f(\blx) - f(\bly)| \leq C(1 + \norm{\blx}_2^{k-1} + \norm{\bly}_2^{k-1})\normt{\blx-\bly}
\]
for some constant $C>0$. The set of such functions is denoted by PL(k). 

Convergence in probability of a sequence of random variables $X_d$ to a random variable $X$ is denoted by $X_d \overset{P}{\to} X$. Convergence in Wasserstein-2 distance of a sequence of probability distributions $\nu_d$ to a limiting distribution $\nu$ is denoted as $\nu_d \overset{\scrW_2}{\to} \nu$, and this fact is equivalent to the statement $\E_{X \sim \nu_d} g(X) \to \E_{X \sim \nu} g(X)$ for all $g \in \text{PL(2)}$ \cite{bayati2011dynamics}. If the $\nu_d$ are \textit{random} probability measures, we say that $\nu_d \overset{\scrW_2}{\to} \nu$ if the same convergence holds in probability, i.e., $\E_{X \sim \nu_d} g(X) \overset{P}{\to} \E_{X \sim \nu} g(X)$ for all $g \in \text{PL(2)}$. The empirical distribution of a vector $\blz \in \R^d$ is defined as $\frac{1}{d}\sum_{i=1}^d \delta(z_i)$, where $\delta(z_i)$ is the Dirac delta distribution centered at $z_i$. 

\textbf{Formulation:} We consider a batched noisy linear model where, at each time $t=0,1,\ldots$, a user has access to an independent batch of data $(\blX^{(t)}, \bly^{(t)}) \in \R^{n\times d} \times \R^n$ satisfying
\[
\bly^{(t)} = \frac{1}{\sqrt{d}}\blX^{(t)}\bltheta^* + \blepsilon.
\]
Above, $\bltheta^* \in \R^d$ is an unknown signal, $\blX^{(t)}$ has i.i.d. standard Gaussian entries, and $\blepsilon \sim \scrN(\mathbf{0}, \sigma^2 \blI_n)$ is i.i.d. noise in the measurements. Given an initial weight vector $\blv^{(0)} \in \R^d$, we are interested in the behavior of iterative algorithms of the form
\begin{equation}\label{eq:algorithm}
    \begin{aligned}
    \utt &= \arg\min_{\blu \in \R^d} \frac{1}{n}\norm*{\bly^{(t)} - \frac{1}{\sqrt{d}}\blX^{(t)} (\blu \odot \vt)}_2^2 + \frac{\lambda}{d}\norm{\blu}_2^2\\
    \vtt &= \psi(\utt, \vt),
    \end{aligned}
\end{equation}
where $\psi \colon \R \times \R \to \R$ is a ``reweighting'' function that acts entry-wise on $(\ut, \vt)$ and $\lambda > 0$ is a hyperparameter governing the strength of the regularization. We we will study the behavior of the iterates $\ut, \vt$ in the high-dimensional limit where $n$ and $d$ both approach infinity with fixed ratio $\frac{d}{n} = \kappa$. Since our primary interest is to reveal the feature learning capabilities of such algorithms when $\bltheta^*$ is a high-dimensional signal with low-dimensional structure, we will typically focus on the regime where $\kappa > T$, where $T$ is the number of total iterations. This ensures that the total number of observed samples $nT$ is smaller than the ambient dimension $d$. 

Before proceeding to our main results, we note that this formulation encompasses a wide variety of classical and modern algorithms (summarized in Table \ref{tab:algs}):

\begin{table}
  \caption{Some algorithms taking the form (\ref{eq:algorithm})}
  \label{tab:algs}
  \centering
  \begin{tabular}{lll}
    \toprule
    % \cmidrule(r){1-2}
    Algorithm  & Reweighting function\\
    \midrule
    Alternating minimization (AM) \cite{hoff2017lasso} & $\psi(u,v) = u$     \\
    Reparameterized IRLS \cite{chartrand2008iteratively, daubechies2010iteratively, radhakrishnan2024linear}  & $\psi(u,v) = (u^2v^2 + \epsilon)^\alpha$   \\
    Linear recursive feature machines (lin-RFM) \cite{radhakrishnan2024linear}     & $\psi(u,v) = \phi(u^2v^2)$ \\
    \bottomrule
  \end{tabular}
\end{table}

\begin{itemize}
    \item \textbf{Alternating minimization:} One perspective on this algorithm is to consider it as a way to perform alternating minimization on the non-convex loss function
    \[
    L(\blu, \blv) = \frac{1}{n} \normt*{\bly - \frac{1}{\sqrt{d}}\blX(\blu \odot \blv)}^2 + \frac{\lambda}{d}\normt{\blu}^2 + \frac{\lambda}{d} \normt{\blv}^2.
    \]
    Using the fact that the loss function is symmetric in $\blu$ and $\blv$, choosing $\psi(u,v) = u$ recovers the mini-batched alternating minimization algorithm for this loss. In other words, $\psi$ simply switches the two parameters $\blu$ and $\blv$.
    % This parameterization corresponds to a two-layer neural network architecture with linear activation function, diagonal inner weight matrix, and weight decay. Recent works \cite{woodworth2020kernel, pesme2021implicit} have studied the implicit bias of gradient-based algorithms for optimizing such networks. In our setup, using the fact that the loss function is symmetric in $\blu$ and $\blv$, the choice $\psi(u,v) = u$ recovers the mini-batched alternating minimization algorithm for this non-convex loss.

    \item \textbf{IRLS algorithms for sparse recovery:} As shown in \cite{radhakrishnan2024linear}, classical IRLS reweighting schemes used for sparse recovery and compressed sensing \cite{daubechies2010iteratively, mohan2012iterative} can be reparameterized in the form of (\ref{eq:algorithm}) $\psi(u, v) = (u^2v^2 + \epsilon)^\alpha$, where different choices of $\alpha$ correspond to different $\ell_p$ penalties.

    \item \textbf{Lin-RFM \cite{radhakrishnan2024linear}:} Generalizing the reparameterized IRLS update, the authors of \cite{radhakrishnan2024linear} propose the choice $\psi(u, v) = \phi(u^2 v^2)$ for some continuous function $\phi \colon \R \to \R^+$. Here, the quantity $u^2v^2$ arises from the average outer product of the gradient of the learned regression function, which was shown empirically in \cite{radhakrishnan2024mech} to correlate with the features learned in the weight matrices of various common neural network architectures.
\end{itemize}

Our goal is to understand statistical properties of the iterates for different choices of $\psi$, and in particular how the test error evolves from iteration to iteration.
% \begin{enumerate}
%     \item How good of an estimate of the true signal is the estimator $\hat{\bltheta}^{(t)} \coloneqq \ut \odot \vt$?
%     \item How many batches of data will it take to achieve low test error?
%     \item What is the best choice of reweighting function $\psi$?
% \end{enumerate}
In the following section, we develop an asymptotic characterization of the iterates that can be used to gain insight into these questions for a large class of reweighting functions and problem settings.

\section{A precise characterization of the iterates}

In this section, we provide a precise characterization of the iterates of the algorithm \ref{eq:algorithm} with i.i.d. Gaussian covariates. First, we introduce and discuss the two main assumptions needed for our main result. The first assumption is concerned with the distribution of the initialization $\blv_0$ and the target signal $\bltheta^*$:

\begin{assumption}\label{assump:init}
The empirical distribution of the entries of $\blv^{(0)}$ and $\bltheta^*$ converges in $\scrW_2$ distance to some joint distribution $\Pi_0$, i.e., $\frac{1}{d}\sum_{i=1}^d \delta(v_i^{(0)}, \theta^*_i) \overset{\scrW_2}{\to} \Pi_0$. Additionally, $v^{(0)}_i \neq 0$ for all $i$ and $\bltheta^*$ has bounded entries almost surely. 
\end{assumption}
Here, the requirement of empirical distribution convergence is easily satisfied by common choices of $\blv^{(0)}$, including the ones vector and i.i.d. Gaussian entries. For a typical sparse regression setup, we might, for example, consider the $\Pi_0$ induced by choosing $\blv^{(0)} = \mathbf{1}_d$ and letting $\bltheta^*$ have i.i.d. entries that equal $0$ with certain probability. The requirement that $\bltheta^*$ has bounded entries appears to be an artifact of the proof, and is used only in the proof of one technical lemma. In our simulations, we find that our asymptotic predictions often remain accurate when $\bltheta^*$ has entries from distributions which are not bounded almost surely (e.g., Gaussian entries). 

Secondly, we define the set of reweighting functions $\psi$ for which our result will apply.
\begin{assumption}\label{assump:psi}
The reweighting function $\psi \colon \R \times \R \to \R$ satisfies the following:
\begin{enumerate}
    \item If $U, V$ are random variables such that $U,V \neq 0$ with probability 1, then $\psi(U,V) \neq 0$ with probability 1. 
    \item $\psi$ is continuous and bounded or $\psi^2$ is pseudo-Lipschitz of order 2.
\end{enumerate}
\end{assumption}
This family allows us to consider many of the choices of $\psi$ discussed in the previous section, including $\psi(u,v) = u$ (AM on linear diagonal networks), $\psi(u,v) = \sqrt{|uv|}$ (lin-RFM and IRLS), $\psi(u,v) = \phi(u^2 v^2)$ for bounded $\phi$ (lin-RFM). We note that this does \emph{not} include some choices of $\psi$ which apply more ``aggressive'' weighting, such as $\psi(u,v) = |uv|$. 
Nevertheless, we can apply our theoretical predictions for these choices of $\psi$ after passing the weights through a bounded activation (such as a sigmoid). In Appendix~\ref{app:moresims}, we show that our predictions often still show excellent agreement with empirical simulation even when the boundedness  assumption is violated.

% Nevertheless, we plot our theoretical predictions for these choices of $\psi$ after passing the weights through a bounded activation (such as a sigmoid) in Appendix~\ref{app:moresims}. 
% We observe excellent agreement between the theoretical predictions and empirical simulation despite the violation of Assumption~\ref{assump:psi}.
% However, our simulations in Appendix \ref{app:moresims} show that passing these weights through a bounded activation like a sigmoid often yields similar performance (and our simulations indicate that the predictions given by Theorem \ref{thm:main} seems to hold for many $\psi$ that violate Assumption \ref{assump:psi}).

Our results are stated in terms of the following iteration, for $t \geq 0$:
\begin{equation}\label{eq:recursion}
    \begin{aligned}
    &\tau_{t+1}, \beta_{t+1} = \arg\max_{\tau \geq 0} \min_{\beta \geq 0}  \left\{ \frac{\tau \sigma^2}{\beta} + \tau\beta(1-\kappa) - \tau^2 + \tau \lambda \E_{(V,\Theta)\sim \Pi_t} \brackets*{\frac{\Theta^2 + \beta^2 \kappa}{\tau V^2 + \beta \lambda}}\right\}\\
    &Q_{t+1} = \frac{\tau_{t+1}V (\Theta + \beta_{t+1} \sqrt{\kappa} G_{t})}{\tau_{t+1} V^2 + \beta_{t+1} \lambda}, \\
    &\Pi_{t+1} = \text{Law}\parens*{\psi(Q_{t+1}, V), \Theta},
    \end{aligned}
\end{equation}
where $G_t \simiid \scrN(0,1)$. In words, given a probability distribution $\Pi_t$ over $\R \times \R$, $\tau_{t+1}$ and $\beta_{t+1}$ are scalars computed as the unique\footnote{The uniqueness of the solution is shown in the proof of Theorem \ref{thm:main}.} solutions to a \textit{deterministic} optimization problem (this can be solved easily by studying the optimality conditions, as shown in Appendix \ref{app:solution}). Then, $Q_{t+1}$ is defined as a random variable that is a function of $(V,\Theta)\sim \Pi_t$ and $G_t \sim \scrN(0,1)$. Lastly, $\Pi_{t+1}$ is defined as the joint distribution of $\psi(Q_{t+1},V)$ and $\Theta$.

Given this iteration, we obtain the following result, which is proved in Appendix \ref{app:proofs}:
\begin{theorem}\label{thm:main}
    Suppose Assumptions \ref{assump:init} and \ref{assump:psi} are satisfied. Then, for any $t \geq 0$ and any function $g \colon \R^3 \to \R$ such that $g \in \text{PL}(2)$ or $g$ is bounded and continuous, we have
    \[
    \frac{1}{d}\sum_{i=1}^d g(u_i^{(t+1)}, v_i^{(t)}, \theta_i^*) \overset{P}{\to} \E\brackets{g(Q_{t+1}, V, \Theta)},
    \]
    where the expectation is over the independent random variables $(V,\Theta) \sim \Pi_t$ and $G_t \sim \scrN(0,1)$.
\end{theorem}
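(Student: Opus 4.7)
The plan is to argue by induction on $t$ with the hypothesis that the empirical joint distribution of $(v_i^{(t)}, \theta_i^*)$ converges in $\mathcal{W}_2$ to $\Pi_t$, the base case $t=0$ being Assumption~\ref{assump:init}. The decisive structural feature is batching: because $X^{(t)}$ is drawn fresh and is independent of both $(v^{(t)}, \theta^*)$ and the past data, we may condition on the previous iterate and treat the $u^{(t+1)}$-subproblem as a standard ridge regression with Gaussian design $X^{(t)}\diag(v^{(t)})/\sqrt{d}$ and targets $y^{(t)}$. Substituting entrywise $\theta_i = v_i^{(t)} u_i$---valid since Assumptions~\ref{assump:init}--\ref{assump:psi} together propagate the nonvanishing of $v^{(t)}$ across iterations---converts this into a ridge regression with isotropic $X^{(t)}/\sqrt{d}$ design, target $\theta^*$, and a separable quadratic penalty whose coordinatewise weights are $1/v_i^{(t)\,2}$.

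The central analytic step is to apply the Convex Gaussian Min-Max Theorem (CGMT) to this ridge regression. Dualizing the squared residual through a vector $z$ and invoking Gordon's comparison inequality replaces $X^{(t)}$ by two independent standard Gaussian vectors; optimizing over their directions reduces the problem to a scalar saddle point over $\tau \asymp \|z\|_2/\sqrt{n}$ and $\beta \asymp \|\theta - \theta^*\|_2/\sqrt{d}$. The empirical measure of $(v^{(t)}, \theta^*)$ enters through the separable sum $\tfrac{1}{d}\sum_i (\theta_i^{*2} + \beta^2\kappa)/(\tau v_i^{(t)\,2} + \beta\lambda)$ arising when the inner minimization is solved coordinatewise, and the inductive hypothesis converts this into the expectation under $\Pi_t$, recovering verbatim the saddle-point objective in~(\ref{eq:recursion}). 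Strict convexity in the primal plus strict concavity in the dual (from the ridge term) yields uniqueness of $(\tau_{t+1}, \beta_{t+1})$, and the coordinatewise KKT conditions at the saddle reproduce the prox formula defining $Q_{t+1}$, with the $G_t \sim \mathcal{N}(0,1)$ randomness inherited from the auxiliary Gaussian vectors. Standard CGMT-to-empirical-distribution machinery then upgrades this to $\tfrac{1}{d}\sum_i g(u_i^{(t+1)}, v_i^{(t)}, \theta_i^*) \overset{P}{\to} \mathbb{E}[g(Q_{t+1}, V, \Theta)]$ for every $g \in \mathrm{PL}(2)$ or bounded continuous $g$, which is the theorem's conclusion at step $t+1$.

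To close the induction I propagate through $v^{(t+1)} = \psi(u^{(t+1)}, v^{(t)})$: if $\psi^2 \in \mathrm{PL}(2)$, the composition $g \circ (\psi, \mathrm{id})$ remains pseudo-Lipschitz of order $2$ and convergence in $\mathcal{W}_2$ passes through directly; if $\psi$ is only bounded continuous, the composition is bounded continuous, and the uniform integrability supplied by boundedness of $\psi$ and of $\theta^*$ upgrades weak convergence to $\mathcal{W}_2$ convergence of the joint law of $(v_i^{(t+1)}, \theta_i^*)$ to $\Pi_{t+1}$. The main obstacle is the CGMT step in the presence of the random weights $v^{(t)}$: both the design's effective covariance and the separable penalty depend on the previous iterate's empirical measure, so one must establish uniform-in-$(\tau,\beta)$ control of the separable sums in order to localize the saddle and transfer concentration from the primary to the auxiliary problem. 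The boundedness of $\theta^*$ in Assumption~\ref{assump:init} supplies the uniform integrability required when testing against pseudo-Lipschitz functions, while $\lambda > 0$ together with the nonzero-preservation clause of Assumption~\ref{assump:psi} keeps the prox denominator $\tau v^{(t)\,2} + \beta\lambda$ bounded away from zero, making the coordinatewise inversion well-defined throughout the iteration.
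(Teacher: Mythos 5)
Your skeleton matches the paper's: induction on $t$ with the $\scrW_2$ inductive hypothesis, exploiting batching to condition on $(\blv^{(t)},\bltheta^*)$, a change of variables using $v_i^{(t)}\neq 0$, the CGMT, scalarization to the $(\tau,\beta)$ saddle whose limit is exactly the objective in (\ref{eq:recursion}), and closing the induction through $\psi$ exactly as in the paper's Step 3. However, there is a genuine gap at the decisive step: you pass from convergence of the optimal \emph{value} to distributional convergence of the \emph{minimizer} by invoking ``standard CGMT-to-empirical-distribution machinery,'' and that is precisely the part the paper argues is not standard here. The usual deviation/strong-convexity arguments (e.g.\ the route in \cite{chang2021provable}) assume uniformly bounded weights, whereas $\blv^{(t)}$ need not be bounded for general $\psi$ (note the compactification radius in Lemma~\ref{lem:compactP1} scales with $\norm{\blv}_\infty$, which can grow with $d$), and one additionally needs convergence strong enough ($\scrW_2$, including second moments) to feed back into the next iteration. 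Your remark about ``uniform-in-$(\tau,\beta)$ control of the separable sums'' gestures at the difficulty but does not supply a mechanism that yields convergence of $\frac{1}{d}\sum_i g(u_i^{(t+1)},v_i^{(t)},\theta_i^*)$ for the stated class of $g$.

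The paper resolves this with a perturbation device borrowed from \cite{bosch2023random}: it applies the CGMT not to the ridge problem itself but to $P_1(\mu)$, the objective augmented by $\frac{\mu}{d}\sum_i h(u_i,v_i,\theta_i^*)$ for $h$ bounded with bounded Hessian or $h=u^2$, proves $P_1(\mu)\overset{P}{\to}\bar P(\mu)$ for all small $\mu$ (this is where boundedness of $\bltheta^*$ enters, via Lemma~\ref{lem:limit}), and then differentiates $\bar P(\mu)$ at $\mu=0$ to extract $\frac{1}{d}\sum_i h(u_i^{(t+1)},v_i^{(t)},\theta_i^*)\overset{P}{\to}\E\,h(Q_{t+1},V,\Theta)$. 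Bounded Lipschitz $h$ are then handled by mollification, $h=u^2$ supplies second-moment convergence, and together these upgrade weak convergence to $\scrW_2$ convergence, from which the theorem's class of test functions (PL(2) or bounded continuous) follows. This also explains why the theorem restricts the test-function class and assumes bounded $\bltheta^*$ --- consequences of the perturbation route that your proposal, as written, neither reproduces nor replaces with an alternative argument. Two minor points: the prox denominator is kept away from zero by $\beta\geq\sigma>0$ and $\lambda>0$, not by the nonvanishing of $\blv^{(t)}$ (which is only needed for the change of variables); and in the scalarization $\beta$ plays the role of $\sqrt{\sigma^2+\normt{\blDelta}^2}$, not $\normt{\blDelta}$ alone.
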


The limit in this theorem should be interpreted as being the limit as $n, d \to \infty$ with their ratio $\kappa = \frac{d}{n}$ held as a constant. Applying the above theorem for each $t \geq 0$, we can get precise asymptotic predictions for a wide variety of test functions of the iterates. One example of particular interest is the test error, which we measure as the normalized $\ell^1$ distance between $\utt \odot \vt$ and $\bltheta^*$, corresponding to $g(u,v,\theta) = |uv - \theta|$ (we provide a proof that this is PL(2) in Proposition \ref{prop:testloss} in Appendix~\ref{app:technical-lem}). We note that the limiting expectation can be computed via simple Monte Carlo simulation of a \textit{scalar} random variable.

From a technical standpoint, our result is obtained by applying the Convex Gaussian Min-Max Theorem (CGMT) \cite{thrampoulidis2015regularized, thrampoulidis2018precise} to the weighted least-squares objective function in \eqref{eq:algorithm}. Previous works have obtained a similar distributional characterization for the solution to least-squares with anisotropic covariates (where the ``weights'' $\blv$ are the square root of the eigenvalues of the data covariance) \cite{chang2021provable}. However, while \cite{chang2021provable} assume that the eigenvalues are uniformly bounded by constants, this is not a reasonable assumption in our setting, since many common choices of $\psi$ are not bounded and hence $\vt$ is not necessarily bounded uniformly for $t > 1$. A second key difference is that we need to obtain a distributional characterization which can be applied recursively for all $t \geq 0$. In other words, if we assume that the empirical distribution of $(\blv^{(0)}, \bltheta^*)$ converges in $\scrW_k$ distance, then we need to show that after one iteration, the iterates also converge in $\scrW_k$ distance (and not in any weaker sense).

To overcome these differences, we use a different technique to show distributional convergence of the iterates. Similar to the approach in \cite{bosch2023random}, we apply the CGMT to a \emph{perturbed} optimization problem, which ultimately allows us to show convergence of test functions of the solutions to the unperturbed problem. While this approach necessitates the additional assumption that $\bltheta^*$ has bounded entries and we obtain results for a slightly smaller family of test functions $g$ (note, for example, that the squared loss $g(u, v, \theta) = (uv-\theta)^2$ is not $\text{PL}(2))$, we obtain a distributional convergence result that can be applied to a \textit{sequence} of recursively defined least-squares problems which define the trajectory of an algorithm, rather than to a single optimization problem. Moreover, our simulations in Appendix~\ref{app:moresims} suggest that the predictions of Theorem \ref{thm:main} still often apply without these additional assumptions, including in the case of the squared loss, indicating that these additional assumptions could potentially be weakened with a more complicated analysis.

\subsection{Application to sparse linear regression}\label{sec:sparsereg}
In this subsection, we apply Theorem \ref{thm:main} to a sparse recovery setting and compare the asymptotic predictions to numerical simulations on high-dimensional Gaussian data. First, we consider a setting where  $n = 250, d=2000$, and $\bltheta^*$ has $\text{Bernoulli}(0.01)$ entries (so the expected sparsity level is $\E[s] = 20$). We run Algorithm \ref{eq:algorithm} with initialization $\blv^{(0)} = \mathbf{1}_d$ for four different choices of reweighting function and display the test error at each iteration  (median over 100 trials) in Figure \ref{fig:sparsereg}. For each choice of reweighting function $\psi$, we choose the regularization parameter $\lambda$ that minimizes the asymptotic test loss achieved within 8 iterations, and we plot the corresponding trajectory. As shown in the figure, the numerical simulations show excellent alignment with the asymptotic predictions even for this moderate choice of $n$ and $d$. 

\begin{figure}
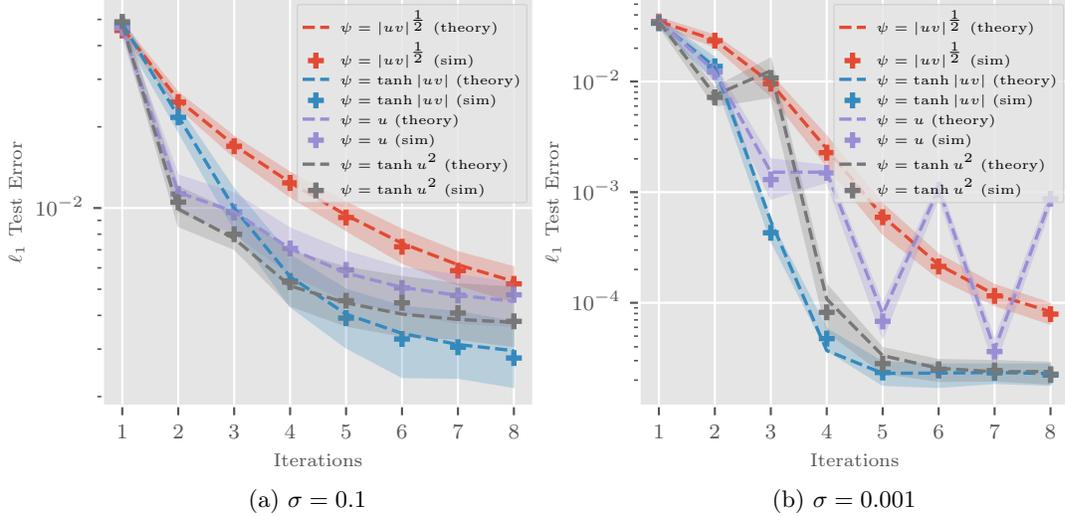

        \captionsetup[subfigure]{oneside,margin={2.5cm,0cm}}
        \begin{subfigure}[b]{5.5cm}
        \centering
        \input{imgs/psi_comparison.pgf}
        \caption{$\sigma = 0.1$}
        \label{fig:sparserega}
     \end{subfigure}
    \hspace{14mm}
    \begin{subfigure}[b]{5.5cm}
        \centering
        \input{imgs/psi_comparison_lownoise.pgf}
        \caption{$\sigma = 0.001$}
        \label{fig:sparseregb}
    \end{subfigure}

    \caption{Theoretical predictions and simulations of the test error $\frac{1}{d}\norm{\blu \odot \blv - \bltheta^*}_1$ (log scale, pluses denote the median over 100 trials and the shaded region indicates the interquartile range) for two different noise levels, where $n = 250, d=2000$, and $\bltheta^*$ has $\text{Bernoulli}(0.01)$ entries. Here, $\psi = |uv|^{\frac{1}{2}}$ corresponds to the classical IRLS weighting from \cite{daubechies2010iteratively}, $\psi = \tanh{|uv|}$ is a version of lin-RFM, $\psi = u$ corresponds to AM, and $\psi = \tanh{u^2}$ is a new reweighting scheme we introduce. We note that the $\psi$ which depend only on $\blu$ can lead to oscillatory behavior in the test risk.}
    \label{fig:sparsereg}
    \end{figure}

The asymptotic predictions show that this family of algorithms can find solutions with low test error within only a few iterations. Our results also reveal fine-grained differences in the convergence behavior of the different algorithms. For instance, more aggressive weightings $\psi = \tanh{|uv|}$ and $\psi = \tanh{u^2}$ seem to find better solutions after several iterations. Interestingly, the weighting functions which depend only on $u$ (like alternating minimization) sometimes display a non-monotonic, oscillatory decay of the test loss. However, we do see a steady decrease in test error after every \textit{pair} of iterations (e.g., in AM, after both parameters have been updated). Finally, we note that our framework allows for analysis of new algorithms for training LDNNs. In particular, to our knowledge, weighting functions of the form $\phi(u^2)$ have not been previously considered for this task, but our results indicate that this small modification to AM is competitive with many existing algorithms in this setting.

\section{Grouped IRLS and the benefits of structured feature learning}\label{sec:group}
In many scenarios, the unknown signal $\bltheta^*$ is known to possess additional structure that can be leveraged during training. One commonly studied example of this is \textit{structured sparsity}, or group sparsity, where $\bltheta^*$ has many blocks which are zero. In this section, we generalize the results of Theorem \ref{thm:main} to the case where the reweighting function respects this additional structure in the signal, i.e., $\psi$ acts on blocks of $\blv$, rather than on individual coordinates.

Concretely, we consider the following modification to our formulation. Let $b \geq 1$ be a constant and write $\R^d$ as a product space over $M = \frac{d}{b}$ factors: $\R^b \times \dots \times \R^b$. Then, $\bltheta^*, \ut, \vt \in \R^d$ can all be represented as $M$ stacked blocks, each in $\R^b$. Under the same linear measurement model, we now let $\psi: \R^b \times \R^b \to \R^b$ act on each of the \textit{factors} of $(\ut,\bltheta^*)$, and consider the same Algorithm~\ref{eq:algorithm}. 

Here, the case $b=1$ recovers the results of the previous section, but the case $b>1$ allows us to study the interplay between signal structure and reweighting scheme in a more fine-grained way. For example, suppose $\bltheta^*$ is known to be group-sparse, meaning that many of the factors $\{\bltheta^*_i\}_{i=1}^M$ are zero. In this case, it might make sense for $\psi$ to return a vector of the form 
\[
\psi(\ut_i, \vt_i) = \alpha_i \mathbf{1}_b,
\]
for some $\alpha_i \in \R$ that is chosen as a function $\ut_i$ and $\vt_i$. This corresponds to a reweighting scheme which acts on blocks, rather than individual entries. Another way to motivate this ``grouped reweighting'' is to leverage the connection to the $\eta$-trick, as in (\ref{eq:irlsparam}). In particular, the group Lasso problem can be written in the following variational form \cite{bach2012optimization}:
\begin{equation}\label{eq:groupirlsparam}
\min_{\bltheta \in \R^d} \min_{\bleta \in \R^M_+} L(\bltheta) + \frac{\lambda}{2} \sum_{i=1}^M \parens*{\frac{\normt{\bltheta_i}^2}{\eta_i} + \eta_i},
\end{equation}
where the closed-form solution to the $\bleta$ minimization yields the classical group norm regularizer $\sum_{i=1}^M \normt{\bltheta_i}$. Here, reparameterizing as  $\alpha_i \to \sqrt{\eta_i}$ and $\blu_i \to \frac{\bltheta_i}{\sqrt{\eta_i}}$ gives rise naturally to the grouped Hadamard parameterization, with $\blv_i = \alpha_i \mathbf{1}_b$. 

From the perspective of linear diagonal networks, such an approach is equivalent to ``tying'' together the weights of the hidden layer that correspond to each block. Rather than studying gradient descent/flow for this parameterization (as in \cite{kumar2023group, li2023implicit}), we consider an optimization approach that relies on alternate updates of $\ut$ and $\vt$.

% Gradient descent/flow for this type of ``diagonally-grouped linear NN'' parameterization is studied in \cite{kumar2023group, li2023implicit}; we consider a different optimization approach which relies on alternate updates of $\ut$ and $\vt$.

We make the same technical assumptions as in Assumptions \ref{assump:init} and \ref{assump:psi}, with the natural modifications to accommodate $b \geq 1$:
\begin{enumerate}
    \item $\Pi_0$ is the limit of the empirical distribution of factors of $(\blv^{(0)}, \bltheta^*)$ and hence is a distribution over $\R^b \times \R^b$. 
    \item Each factor $\bltheta^*_i \in \R^b$ for $i = 1, \dots, M$ has bounded $\ell_2$-norm almost surely.
    \item $\psi$ is bounded and continuous or each of its coordinate projections satisfies $\psi_j^2 \in \text{PL}(2)$ for $j = 1, \dots, b$.
\end{enumerate}

A straightforward extension of Theorem \ref{thm:main} yields the following generalization for the grouped algorithm, where $\blV, \blTheta, \blG_t, \blQ_{t+1} \in \R^b$ are now vector-valued random variables:  For $t \geq 0$, let
\begin{equation}\label{eq:group-recursion}
    \begin{aligned}
    &\tau_{t+1}, \beta_{t+1} = \arg\max_{\tau \geq 0} \min_{\beta \geq 0}  \left\{ \frac{\tau \sigma^2}{\beta} + \tau\beta(1-\kappa) - \tau^2 + \tau \lambda \E_{(\blV,\blTheta)\sim \Pi_t} \brackets*{\frac{1}{b} \sum_{j=1}^b \frac{\Theta_j^2 + \beta^2 \kappa}{\tau V_j^2 + \beta \lambda}}\right\} \\
    &\blQ_{t+1} = \frac{\tau_{t+1}\blV \odot (\blTheta + \beta_{t+1} \blG_{t} \sqrt{\kappa})}{\tau_{t+1} \blV^{\odot 2} + \beta_{t+1} \lambda \mathbf{1}_b}, \text{ (entry-wise division)}\\
    &\Pi_{t+1} = \text{Law}\parens*{\psi(\blQ_{t+1}, \blV), \blTheta}.
    \end{aligned}
\end{equation}
Here, $\blG_t \simiid \scrN(\mathbf{0}, \blI_b)$. Then, we have the following result, which is proved in Appendix~\ref{app:proofs}.
\begin{theorem}\label{thm:group}[Generalization of Theorem \ref{thm:main} for $b \geq 1$]
    Under the assumptions above, for any $t \geq 0$ and any function $g \colon (\R^b)^3 \to \R$ such that $g \in \text{PL}(2)$ or $g$ is bounded and continuous, we have
    \[
    \frac{1}{M}\sum_{i=1}^M g(\blu_i^{(t+1)}, \blv_i^{(t)}, \bltheta_i^*) \overset{P}{\to} \E\brackets{g(\blQ_{t+1}, \blV, \blTheta)}.
    \]
\end{theorem}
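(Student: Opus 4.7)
The plan is to prove Theorem \ref{thm:group} by induction on $t$, mirroring the argument used for Theorem \ref{thm:main} but tracking $\scrW_2$ convergence of the empirical distribution of \emph{blocks} $(\blv^{(t)}_i, \bltheta^*_i) \in \R^b \times \R^b$ rather than of individual coordinates. The inductive hypothesis is that $\frac{1}{M}\sum_{i=1}^M \delta(\blv^{(t)}_i, \bltheta^*_i) \overset{\scrW_2}{\to} \Pi_t$; the base case $t = 0$ is the initialization assumption, and the test-function statement to be proved at each step follows once this distributional convergence is in hand.

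For the least-squares step, the key observation is that the objective
\[
\min_{\blu \in \R^d} \frac{1}{n}\normt*{\bly^{(t)} - \frac{1}{\sqrt{d}}\blX^{(t)}(\blu \odot \blv^{(t)})}^2 + \frac{\lambda}{d}\normt{\blu}^2
\]
sees only the flattened vectors, so the perturbed CGMT argument of Theorem \ref{thm:main} applies verbatim, returning the scalar saddle-point problem in $(\tau,\beta)$. The only bookkeeping change is that the expectation $\E_{(V,\Theta)\sim \widetilde{\Pi}_t}\bigl[\tfrac{\Theta^2 + \beta^2\kappa}{\tau V^2 + \beta \lambda}\bigr]$ over the coordinate-wise marginal $\widetilde{\Pi}_t$ of $\Pi_t$ coincides, by the tower property, with the grouped expression $\E_{(\blV,\blTheta)\sim \Pi_t}\bigl[\tfrac{1}{b}\sum_{j=1}^b \tfrac{\Theta_j^2 + \beta^2\kappa}{\tau V_j^2 + \beta\lambda}\bigr]$ appearing in (\ref{eq:group-recursion}). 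Because the coordinate-wise solution formula for $u^{(t+1)}_i$ is pointwise in $i$ and the Gaussian corruption $g_i \simiid \scrN(0,1)$ is independent across the full index, the $b$ coordinates inside each block assemble into the block-valued limit $\blQ_{t+1}$ of (\ref{eq:group-recursion}). Running the perturbed-CGMT argument with test functions $g \colon (\R^b)^3 \to \R$ that are $\text{PL}(2)$ or bounded continuous then gives
\[
\frac{1}{M}\sum_{i=1}^M g(\blu^{(t+1)}_i, \blv^{(t)}_i, \bltheta^*_i) \overset{P}{\to} \E[g(\blQ_{t+1}, \blV, \blTheta)].
\]

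For the reweighting step, the blockwise hypotheses on $\psi$ (either bounded continuous, or each coordinate projection $\psi_j^2 \in \text{PL}(2)$) ensure that the pushforward $(\blQ_{t+1}, \blV, \blTheta) \mapsto (\psi(\blQ_{t+1}, \blV), \blTheta)$ preserves $\scrW_2$ convergence at the block level, closing the induction with $\Pi_{t+1} = \mathrm{Law}(\psi(\blQ_{t+1}, \blV), \blTheta)$.

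\textbf{Main obstacle.} As in Theorem \ref{thm:main}, the principal technical difficulty is preserving $\scrW_2$ convergence (rather than a weaker convergence of test functions) across iterations when $\blv^{(t)}$ is not uniformly bounded. This forces the use of the perturbation technique of \cite{bosch2023random}, and in the grouped setting every coordinate-wise pseudo-Lipschitz control has to be lifted to a blockwise one: the boundedness assumption is stated for $\normt{\bltheta^*_i}$ rather than for individual entries, and one must verify that the moment estimates driving the perturbation argument survive the aggregation performed inside $\psi$. Apart from this upgrade in the norm bookkeeping, the grouped case introduces no genuinely new mathematical ingredient, since the CGMT decoupling of the least-squares step is agnostic to block structure and $\psi$ only enters through a continuous mapping argument at the end.
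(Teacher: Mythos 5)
Your proposal is correct and follows essentially the same route as the paper: block-level induction on the $\scrW_2$ convergence of the empirical distribution of blocks $(\blv^{(t)}_i,\bltheta^*_i)$, rerunning the perturbed-CGMT argument of Theorem \ref{thm:main} with block-valued test functions (so the inner minimization over $\blu$ becomes block-separable and the $(\tau,\beta)$ saddle point coincides, as you note, with the coordinate-marginal expression), and using the boundedness/PL($2$) hypotheses on $\psi$ to close the induction via $\Pi_{t+1}=\mathrm{Law}(\psi(\blQ_{t+1},\blV),\blTheta)$. One caution: the theorem is not a corollary of Theorem \ref{thm:main} applied coordinate-wise, since coordinate-marginal convergence does not determine the within-block joint empirical distribution and the auxiliary-problem solution formula does not transfer directly to the primary iterate, so the heuristic that the $b$ coordinates ``assemble'' into $\blQ_{t+1}$ must indeed be replaced (as your final step does) by carrying the block perturbation $\frac{\mu}{M}\sum_{i=1}^M h(\blu_i,\blv_i,\bltheta^*_i)$, with $\abs{\mu}\leq \lambda/(bC)$, through the scalarization and the block Moreau-envelope limit.
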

Given a reweighting function $\psi$, Theorem~\ref{thm:group} characterizes the distribution of the factors (blocks) of the iterates. Hence, by choosing $g(\blu, \blv, \bltheta) = |\blu \odot \blv - \bltheta|$, we can predict the exact limiting test error for this family of algorithms.

\begin{figure}
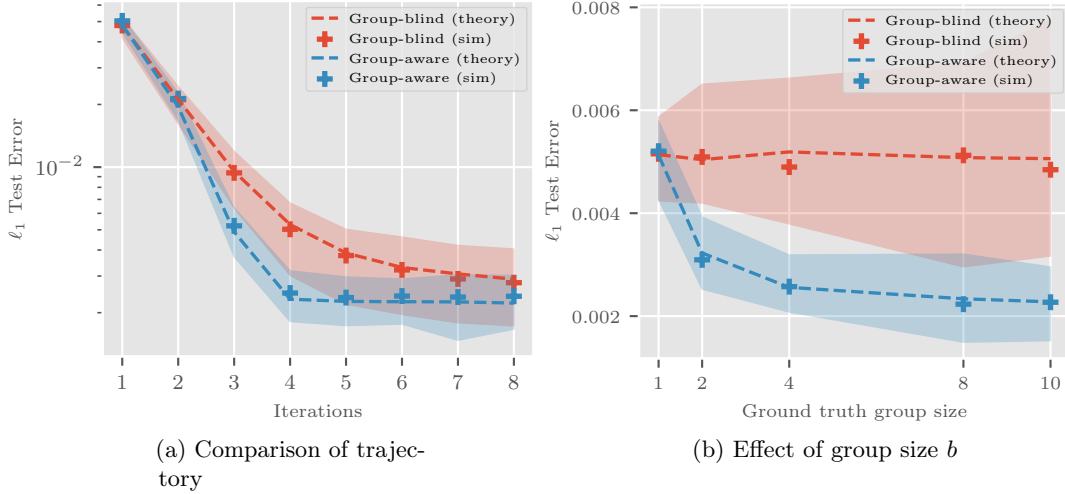

% \captionsetup{justification=centering}
\captionsetup[subfigure]{oneside,margin={2cm,0cm}}
    \begin{subfigure}[t]{5.5cm}
        \centering
        \input{imgs/group_iterates.pgf}
        \caption{Comparison of trajectory}
        \label{fig:groupa}
     \end{subfigure}
    \hspace{14mm}
    \begin{subfigure}[t]{5.5cm}
        \centering
        \input{imgs/varygroupsize.pgf}
        \caption{Effect of group size $b$}
        \label{fig:groupb}
    \end{subfigure}

\caption{Group-blind ($\psi_{gb}$) vs. group-aware ($\psi_{ga}$) reweighting when $\bltheta^*$ has group-sparse structure. We set $n=500, d=4000, \sigma=0.1$, and  $\bltheta^*_i \simiid \text{Bernoulli}(0.01)\mathbf{1}_b$. For each curve, $\lambda$ is set to minimize the asymptotic test error achieved. Simulation results are the median/IQR over 100 trials. Left: Comparison of the test error trajectory (log scale) for a fixed block size $b=8$. Right: $\ell_1$ test error after $T=4$ iterations, for varying group sizes.}
\label{fig:group}
\end{figure}

Computing these theoretical predictions reveals that choosing $\psi$ in a group-aware way can lead to significant performance improvements compared to coordinate-wise reweighting. In Figure \ref{fig:group}, we fix $\sigma = 0.1, n=500, d=4000$, and set the overall expected sparsity level of $\bltheta^*$ as in Figure \ref{fig:sparsereg}. We compare the performance of Algorithm \ref{eq:algorithm} for a ``group-blind'' ($\psi_{gb}$) and ``group-aware'' ($\psi_{ga}$) choice of reweighting function:
\begin{itemize}
    \item $\psi_{gb}(\blu, \blv) = \tanh|\blu \odot \blv|$ --- note this is identical to one of the reweightings considered in Section \ref{sec:sparsereg}.
    \item $\psi_{\text{ga}}(\blu, \blv) = \parens*{\frac{1}{b}\sum_{j=1}^b \tanh{|u_j v_j|}}\mathbf{1}_b$
\end{itemize}
The theoretical predictions align with simulations and show a notable improvement in performance when using the group-aware scheme with $b>1$. Moreover, as the group size $b$ increases, the performance of $\psi_{gb}$ remains approximately the same, indicating that it is not able to take adapt to the group-structure. By contrast, using $\psi_{ga}$ leads to a consistent improvement in test error as $b$ gets larger. Hence, the test error when using the group-aware scheme scales with the size/number of groups, rather than the overall sparsity level. 

\section{Conclusion}
In this paper, we derived a precise asymptotic characterization of the iterates of a family of algorithms for learning high-dimensional linear models with linear diagonal networks. We used these predictions to obtain fine-grained predictions of the test error at each iteration for various existing algorithms for this task, and we showed that our framework can also be used as a test bed for new variations on these algorithms that take a similar form. Lastly, we demonstrated the advantage of embedding more structure into the model by tying together groups of weights when the ground-truth has structured sparsity. Several interesting open questions about these types of algorithms remain. While our simulations align very well with the predicted asymptotic trajectory, it would be interesting to obtain finite-sample guarantees that hold even for batch sizes that are much smaller than $d$ (as in the ``mini-batch'' case studied by \cite{lou2024hyperparameter}). Moreover, seeing as our analysis depends crucially on the independence the covariates at every iteration, developing precise predictions of the trajectory in the non-batched setting remains an interesting direction for future work.

\ifarxiv

\else
\begin{ack}
Use unnumbered first level headings for the acknowledgments. All acknowledgments
go at the end of the paper before the list of references. Moreover, you are required to declare
funding (financial activities supporting the submitted work) and competing interests (related financial activities outside the submitted work).
More information about this disclosure can be found at: \url{https://neurips.cc/Conferences/2024/PaperInformation/FundingDisclosure}.

Do {\bf not} include this section in the anonymized submission, only in the final paper. You can use the \texttt{ack} environment provided in the style file to automatically hide this section in the anonymized submission.
\end{ack}
\fi
\bibliography{refs}

%%%%%%%%%%%%%%%%%%%%%%%%%%%%%%%%%%%%%%%%%%%%%%%%%%%%%%%%%%%%

\appendix

% \section{Appendix / supplemental material}

\section{Proof of main results} \label{app:proofs}

In this section, we provide the proofs of our main results.

\subsection{Notation and background}
\textbf{Notation} The ones vector of dimension $d$ is denoted $\mathbf{1}_d$. We write $a \lesssim b$ when $a \leq Cb$ for some sufficiently large constant $C > 0$ which does not depend on $d$. We denote the element-wise multiplication (Hadamard product) of two vectors $\blx$ and $\bly$ as $\blx \odot \bly$. Element-wise division of two vectors is denoted as $\frac{\blx}{\bly}$. We use the shorthand $(\cdot)_{+} = \max(\cdot, 0)$. A function $f \colon \R^p \to \R$ is called \textit{pseudo-Lipschitz} of order $k$ if, for all $\blx, \bly \in \R^p$, 
\[
|f(\blx) - f(\bly)| \leq C(1 + \norm{\blx}_2^{k-1} + \norm{\bly}_2^{k-1})\normt{\blx-\bly}
\]
for some constant $C>0$. The set of such functions is denoted PL(k). 

Convergence in probability of a sequence of random variables $X_d$ to a random variable $X$ is denoted $X_d \overset{P}{\to} X$. Convergence in Wasserstein-2 distance of a sequence of probability distributions $\nu_d$ to a limiting distribution $\nu$ is denoted as $\nu_d \overset{\scrW_2}{\to} \nu$, and this fact is equivalent to the statement $\E_{X \sim \nu_d} g(X) \to \E_{X \sim \nu} g(X)$ for all $g \in \text{PL(2)}$ \cite{bayati2011dynamics}. If the $\nu_d$ are \textit{random} probability measures, we say that $\nu_d \overset{\scrW_2}{\to} \nu$ if the same convergence holds in probability, i.e., $\E_{X \sim \nu_d} g(X) \overset{P}{\to} \E_{X \sim \nu} g(X)$ for all $g \in \text{PL(2)}$. The empirical distribution of a vector $\blz \in \R^d$ is defined as $\frac{1}{d}\sum_{i=1}^d \delta(z_i)$, where $\delta(z_i)$ is the Dirac delta distribution centered at $z_i$. For any random variable (or group of random variables) $X$, we use the notation $\text{Law}(X)$ to denote the probability distribution of $X$.

We also define two key quantities which appear in the analysis.

\begin{definition}\label{def:moreau}
The \textit{Moreau envelope} function of a proper, lower semi-continuous, convex function $\ell \colon \R^p \to \R$ with step size $\tau$ is defined as 
\[
\scrM_{\ell}(\blx; \tau) = \min_{\bly \in \R^p} \ell(\bly) + \frac{1}{2\tau}\normt{\bly-\blx}^2.
\]
The \textit{proximal (prox)} operator of $\ell$ with step size $\tau$, denoted $\text{prox}_\ell(\blx, \tau)$ is defined as the $\arg \min$ of the above optimization problem.
\end{definition}

Lastly, we restate the version of the Convex Gaussian Min-Max Theorem (CGMT) that we will use in our proofs.
\begin{theorem}[Convex Gaussian Min-Max Theorem \cite{thrampoulidis2018precise}]\label{thm:cgmt}
    Let $\blG \in \R^{m \times n}, \blg \in \R^m, \blh \in \R^n$ have i.i.d. $\scrN(0,1)$ entries. Let $\scrS_w \subset \R^n$ and $\scrS_u \subset \R^m$ be compact, convex sets, and $f \colon \R^n \times \R^m \to \R$ be convex-concave on $\scrS_w \times \scrS_u$. Define the following two min-max problems:
    
    \begin{align*}
    \Phi(\blG) &\coloneq \min_{\blw \in \scrS_w} \max_{\blu \in \scrS_u} \blu^\top \blG \blw + f(\blw, \blu)\\
    \phi(\blg, \blh) &\coloneq \min_{\blw \in \scrS_w} \max_{\blu \in \scrS_u} \normt{\blw} \blu^\top \blg + \normt{\blu} \blw^\top \blh + f(\blw, \blu)
    \end{align*}
    
    Then, for all $c \in \R$ and $t > 0$, 
    \[
    \P\braces*{\abs{\Phi(\blG) - c} > t} \leq 2\P\braces*{\abs{\phi(\blg, \blh) - c} > t}
    \]
\end{theorem}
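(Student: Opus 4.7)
\textbf{Proof proposal for Theorem \ref{thm:group}.}

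The plan is to proceed by induction on $t$, paralleling the proof of Theorem~\ref{thm:main} but carefully tracking block-level quantities. The base case $t=0$ is the hypothesis on the empirical distribution of the blocks $(\blv_i^{(0)}, \bltheta_i^*)$. For the inductive step, assume that the empirical distribution of the blocks of $(\blv^{(t)}, \bltheta^*)$ converges in $\scrW_2$ to $\Pi_t$, and that $\blv^{(t)}$ has all blocks nonzero almost surely (the latter is preserved by Assumption~\ref{assump:psi}.1 generalized block-wise). Conditionally on $\blv^{(t)}$, the update for $\blu^{(t+1)}$ is a \emph{convex} weighted least-squares problem, which is the correct setting for CGMT.

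First I would rewrite the objective in the form needed by Theorem~\ref{thm:cgmt}. Substituting the linear model, the primal problem becomes, after a standard dualization of the squared loss,
\[
\min_{\blu} \max_{\bllambda} \frac{1}{\sqrt{nd}} \bllambda^\top \blX^{(t)}\bigl(\blu\odot\blv^{(t)} - \bltheta^*\bigr) - \tfrac{1}{4n}\|\bllambda\|_2^2 + \tfrac{1}{\sqrt{n}}\bllambda^\top \blepsilon/\sqrt{n} + \tfrac{\lambda}{d}\|\blu\|_2^2,
\]
with a change of variables $\blw = \blu \odot \blv^{(t)} - \bltheta^*$ to put the Gaussian matrix $\blX^{(t)}$ in the canonical bilinear position. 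To handle unboundedness of $\blv^{(t)}$ for $t\geq 1$ and the compactness requirement of CGMT, I would follow the perturbation strategy of \cite{bosch2023random} used in Theorem~\ref{thm:main}: add a small quadratic perturbation and intersect with large balls whose radii are later sent to infinity, exploiting strong convexity in $\blu$ coming from the $\ell_2$ regularizer. Applying CGMT produces the auxiliary problem
\[
\phi(\blg,\blh) = \min_{\blu} \max_{\bllambda}\ \tfrac{\|\blu\odot\blv^{(t)} - \bltheta^*\|_2}{\sqrt{nd}}\, \bllambda^\top\blg + \tfrac{\|\bllambda\|_2}{\sqrt{nd}}(\blu\odot\blv^{(t)}-\bltheta^*)^\top\blh - \tfrac{1}{4n}\|\bllambda\|_2^2 + \cdots
\]
Scalarizing the maximization over the direction of $\bllambda$ and introducing scalar variables $\beta = \|\bllambda\|/\sqrt{n}$ and $\tau$ (conjugate to the residual norm), the AO decouples across the $M$ blocks, yielding a block-separable problem whose inner minimization is, for each block $i$, a standard $b$-dimensional ridge-type problem with data $(\blv_i^{(t)}, \bltheta_i^*)$ and Gaussian vector $\blh_i\in\R^b$.

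Solving each block's inner problem in closed form gives
\[
\blu_i^{(t+1),\mathrm{AO}} = \frac{\tau\, \blv_i^{(t)}\odot(\bltheta_i^* + \beta\sqrt{\kappa}\,\blh_i)}{\tau\, (\blv_i^{(t)})^{\odot 2} + \beta\lambda\,\mathbf 1_b},
\]
which matches the definition of $\blQ_{t+1}$ in \eqref{eq:group-recursion} up to identifying $(\blv_i^{(t)},\bltheta_i^*,\blh_i)$ with $(\blV,\blTheta,\blG_t)$. The outer scalar saddle point over $(\tau,\beta)$ becomes, after dividing by $d$, a deterministic problem whose empirical averages converge by the $\scrW_2$ induction hypothesis and by a standard uniform convergence argument in $(\tau,\beta)$ (leveraging convexity/concavity of the objective and compactness of sub-level sets, cf.\ the $b=1$ case). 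Uniqueness of the saddle point then yields $\tau^{(t+1)}\to \tau_{t+1}$ and $\beta^{(t+1)}\to\beta_{t+1}$ in probability.

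The final piece is to promote pointwise CGMT-level control of scalars to \emph{joint distributional} convergence of the blocks $(\blu_i^{(t+1)}, \blv_i^{(t)}, \bltheta_i^*)$. I would use the following standard trick: for any target test function $g$ (PL(2) or bounded continuous), augment the AO with a linear perturbation $\tfrac{1}{d}\sum_i \mu\cdot g_{\mathrm{aux},i}$ using a smooth compactly-supported approximation; differentiating the deterministic equivalent of $\phi$ at $\mu=0$ extracts exactly $\E\,g(\blQ_{t+1},\blV,\blTheta)$. For PL(2) test functions I would first truncate $g$ (using the assumption that $\|\bltheta_i^*\|_2$ is bounded and the a priori $\ell_2$ bound on $\blu^{(t+1)}$ coming from the ridge regularizer), apply the bounded-continuous argument, then take the truncation to infinity via uniform integrability of $\|\blQ_{t+1}\|^2$ that can be checked directly from its closed form. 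Applying $\psi$ block-wise and invoking either boundedness/continuity or the $\scrW_2$-continuity of $\psi^2$ (block-wise PL(2)) finally gives $\mathrm{Law}(\psi(\blQ_{t+1},\blV),\blTheta) = \Pi_{t+1}$, closing the induction.

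The main obstacle I anticipate is the block-level perturbation-to-distribution step: in the scalar case $b=1$ one can leverage one-dimensional prox-calculus identities, but here each block is a $b$-dimensional problem and the coupling of coordinates within a block (through $\psi_{\mathrm{ga}}$-style functions, which are not coordinate-separable) means the derivative-of-AO trick must handle vector-valued perturbations. The closed form of $\blQ_{t+1}$ is still entry-wise in the block index (the ridge problem separates over coordinates), so the block-wise joint law of $(\blQ_{t+1},\blV,\blTheta)$ in $\R^{3b}$ is determined by finitely many scalar moments, and the scalar CGMT extraction suffices — but verifying this decoupling carefully, together with checking Assumption~\ref{assump:psi}.1 at the block level (to avoid degenerate blocks propagating), is the delicate bookkeeping step.
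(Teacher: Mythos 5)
Your proposal does not address the statement at hand. The statement to be established is Theorem~\ref{thm:cgmt} itself — the Convex Gaussian Min-Max Theorem, i.e., the concentration-transfer inequality $\P\{|\Phi(\blG)-c|>t\}\leq 2\P\{|\phi(\blg,\blh)-c|>t\}$ relating the primary bilinear min-max problem to the auxiliary one. In the paper this is an external result quoted from \cite{thrampoulidis2018precise} and used as a black box. What you have written instead is a proof sketch of Theorem~\ref{thm:group} (the grouped asymptotic characterization of the iterates), in which you \emph{invoke} CGMT as a tool; nowhere do you argue why the auxiliary problem $\phi(\blg,\blh)$ controls the distribution of $\Phi(\blG)$ in the first place. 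So the central object the statement asks you to justify is precisely the step you take for granted.

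A proof of the statement itself would need entirely different ingredients: Gordon's Gaussian comparison (min-max) inequality applied to the processes $\blu^\top\blG\blw + f(\blw,\blu)$ and $\normt{\blw}\,\blu^\top\blg + \normt{\blu}\,\blw^\top\blh + f(\blw,\blu)$ to obtain the one-sided bound $\P\{\Phi(\blG)<c\}\leq 2\P\{\phi(\blg,\blh)\leq c\}$, and then the convex-concave structure of $f$ on the compact convex sets $\scrS_w\times\scrS_u$ (via Sion's minimax theorem, allowing the min and max to be exchanged) to run the same comparison on the flipped max-min problem and obtain the matching upper-tail bound $\P\{\Phi(\blG)>c\}\leq 2\P\{\phi(\blg,\blh)\geq c\}$; a union bound over the two tails then yields the stated two-sided inequality. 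None of this machinery — Gordon's inequality, the exchange of min and max using convexity and compactness, or the assembly of the two one-sided bounds — appears in your proposal, so as a proof of Theorem~\ref{thm:cgmt} it has a complete gap rather than a repairable one.
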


\subsection{Proof of Theorems \ref{thm:main} and \ref{thm:group}}
\begin{proof}[Proof of Theorem \ref{thm:main}]
Assume that $\frac{1}{d}\sum_{i=1}^d \delta(v_i^{(t)}, \theta^*_i) \overset{\scrW_2}{\to} \Pi_t$ (note that this holds by assumption at $t=0$; we will show later that it holds at time $t+1$).

First observe that convergence of the joint empirical distribution of $(\utt, \vt, \bltheta^*)$ to the joint distribution of $(Q_{t+1}, V, \Theta)$ in Wasserstein-2 distance implies that
\[
\frac{1}{d}\sum_{i=1}^d g(u_i^{(t+1)}, v_i^{(t)}, \theta^*_i) \overset{P}{\to} \E \brackets*{g(Q_{t+1}, V, \Theta)},
\]
for any $g \in \text{PL}(2)$ or which is bounded and continuous. This is because $\scrW_2$ convergence implies convergence in expectation of any pseudo-Lipschitz function of order 2 \cite[Lemma 5]{bayati2011dynamics} and of any bounded continuous function (since $\scrW_2$ convergence is stronger than weak convergence \cite[Theorem 6.9]{villani2009optimal}). Hence, it suffices to show that
\begin{equation}\label{eq:desiredconv}
    \frac{1}{d} \sum_{i=1}^d \delta(u_i^{(t+1)}, v_i^{(t)}, \theta^*_i) \overset{\scrW_2}{\to} \text{Law}(Q_{t+1}, V, \Theta),
\end{equation}
where $(V,\Theta) \sim \Pi_t$ and $Q_{t+1}$ is defined as in Eq. \ref{eq:recursion}.

Recall that the objective function for the update on $\blu$ is given by
\[
\utt = \arg \min_{\blu \in \R^d} \frac{1}{n} \normt*{\bly^{(t)} - \frac{1}{\sqrt{d}}\blX^{(t)} (\blu \odot \vt)}^2 + \frac{\lambda}{d}\normt*{\blu}^2.
\]
Rather than study this update directly, we first analyze a slightly more general problem (following the approach in \cite{bosch2023random}). Let $h \colon \R^3 \to \R$ be a continuous test function with $\normt{\nabla^2 h} \leq C$ and that satisfies one of the following:
\begin{enumerate}
    \item $h$ is uniformly bounded.
    \item $h(u,v,\theta) = u^2$. 
\end{enumerate}
Then we consider the following problem (the dependence of $\blX, \bly$, and $\blv$ on $t$ is dropped to simplify the notation):
\begin{align}\label{eq:P1}
P_1(\mu) &= \min_{\blu \in \R^d} \frac{1}{n} \normt*{\bly - \frac{1}{\sqrt{d}}\blX (\blu \odot \blv)}^2 + \frac{\lambda}{d}\normt*{\blu}^2 + \frac{\mu}{d}\sum_{i=1}^d h(u_i, v_i, \theta_i^*),
\end{align}
where $\mu \in [-\mu^*, \mu^*]$ and $\mu^* = \frac{\lambda}{C}$ is chosen sufficiently small so that the objective function (scaled by $d$) is $\lambda$-strongly convex for all $\mu$ in this range. The case $\mu=0$ recovers the original problem of interest.

\paragraph{Step 1: Convergence of the loss}
Rewriting this in terms of the error vector $\blDelta \coloneqq \frac{1}{\sqrt{d}}(\blu \odot \blv - \bltheta^*)$, we have
\[
P_1(\mu) = \min_{\blDelta \in \R^d} \frac{1}{n} \normt*{\blepsilon - \blX\blDelta}^2 + \frac{\lambda}{d}\normt*{\frac{\sqrt{d}\blDelta + \bltheta^*}{\blv}}^2 + \frac{\mu}{d}\sum_{i=1}^d h\parens*{\frac{\sqrt{d}\Delta_i + \theta_i^*}{v_i}, v_i, \theta_i^*}.
\]

In writing this, we use the fact that $v_i \neq 0$ for all $i$ with probability $1$ (and the notation in the second-to-last term indicates entry-wise division). Now, using the identity $\normt{\cdot}^2 = \max_{\blq} 2\blq^\top (\cdot) - \normt{\blq}^2$, we can write this as
\begin{align}\label{eq:P1unconstrained}
P_1(\mu) = \min_{\blDelta \in \R^d} \max_{\blq \in \R^n} \frac{2}{\sqrt{n}} \blq^\top \blepsilon - \frac{2}{\sqrt{n}} \blq^\top \blX \blDelta - \normt{\blq}^2 + \frac{\lambda}{d}\normt*{\frac{\sqrt{d}\blDelta + \bltheta^*}{\blv}}^2 + \frac{\mu}{d}\sum_{i=1}^d h\parens*{\frac{\sqrt{d}\Delta_i + \theta_i^*}{v_i}, v_i, \theta_i^*}.
\end{align}
Next, in Lemma \ref{lem:compactP1}, we show that there exist Euclidean balls $B_\Delta$ and $B_q$, each of radius $C_1\norm{\blv}_\infty$ such that, with probability approaching $1$, we can constrain the feasible set to lie with these balls without changing the value of $P_1(\mu)$, so we can study
\begin{align}\label{eq:P1constrained}
\tilde{P}_1(\mu) = \min_{\blDelta \in B_\Delta} \max_{\blq \in B_q} \frac{2}{\sqrt{n}} \blq^\top \blepsilon - \frac{2}{\sqrt{n}} \blq^\top \blX \blDelta - \normt{\blq}^2 + \frac{\lambda}{d}\normt*{\frac{\sqrt{d}\blDelta + \bltheta^*}{\blv}}^2 + \frac{\mu}{d}\sum_{i=1}^d h\parens*{\frac{\sqrt{d}\Delta_i + \theta_i^*}{v_i}, v_i, \theta_i^*},
\end{align}
where $P_1(\mu) = \tilde{P}_1(\mu)$ with probability tending to $1$. We can therefore condition on this event for the remainder of the analysis without changing our asymptotic conclusions.

Now, noting that this is in the correct form to apply Theorem \ref{thm:cgmt}, we define the auxiliary optimization problem
\begin{align}
    P_2(\mu) =  \min_{\blDelta \in B_\Delta} \max_{\blq \in B_q} \frac{2}{\sqrt{n}} \blq^\top \blepsilon - \frac{2}{\sqrt{n}} \normt{\blq}  \blg^\top \blDelta - \frac{2}{\sqrt{n}} \normt{\blDelta}  \blh^\top \blq  &- \normt{\blq}^2 + \frac{\lambda}{d}\normt*{\frac{\sqrt{d}\blDelta + \bltheta^*}{\blv}}^2 \\
    &+ \frac{\mu}{d}\sum_{i=1}^d h\parens*{\frac{\sqrt{d}\Delta_i + \theta_i^*}{v_i}, v_i, \theta_i^*},
\end{align}
where $\blg \in \R^d$ and $\blh \in \R^n$ have i.i.d. standard normal entries. By Theorem \ref{thm:cgmt}, for all $\delta > 0$ and fixed $\bar{P}(\mu) \in \R$, 
\[
\P\braces{|P_1(\mu) - \bar{P}(\mu)| > \delta} \leq 2\P\braces{|P_2(\mu) - \bar{P}(\mu)| > \delta}.
\]
In particular, if we can find some $\bar{P}(\mu)$ such that $P_2(\mu) \overset{P}{\to} \bar{P}(\mu)$, then we can conclude also that $P_1(\mu) \overset{P}{\to} \bar{P}(\mu)$.

To accomplish this, we next perform a series of simplifications to $P_2(\mu)$ which will later help us characterize its asymptotic behavior. First, we can decouple the optimization over $\blq$ into its norm and direction, and the latter can be solved explicitly. Letting $\tau = \normt{\blq}$, this yields
    \begin{equation}
        \begin{aligned}
        P_2(\mu) =  \min_{\blDelta \in B_\Delta} \max_{0 \leq \tau \leq R} \frac{2\tau}{\sqrt{n}} \normt*{\blepsilon - \normt{\blDelta}\blh} - \frac{2 \tau}{\sqrt{n}}  \blg^\top \blDelta &- \tau^2 + \frac{\lambda}{d}\normt*{\frac{\sqrt{d}\blDelta + \bltheta^*}{\blv}}^2\\ 
        &+ \frac{\mu}{d}\sum_{i=1}^d h\parens*{\frac{\sqrt{d}\Delta_i + \theta_i^*}{v_i}, v_i, \theta_i^*},
        \end{aligned}
    \end{equation}
    where $R \coloneqq C_1\norm{\blv}_\infty$.
    Next, note that $\blepsilon$ and $\blh$ are independent Gaussian vectors and hence $\blepsilon - \normt{\blDelta}\blh \overset{d}{=} \sqrt{\sigma^2 + \normt{\blDelta}^2}\blh$. So, we have
    \begin{equation}
        \begin{aligned}
        P_2(\mu) \overset{d}{=}  \min_{\blDelta \in B_\Delta} \max_{0 \leq \tau \leq R} \frac{2\tau \normt{\blh}}{\sqrt{n}} \sqrt{\sigma^2 + \normt{\blDelta}^2} - \frac{2 \tau}{\sqrt{n}}  \blg^\top \blDelta &- \tau^2 + \frac{\lambda}{d}\normt*{\frac{\sqrt{d}\blDelta + \bltheta^*}{\blv}}^2\\ 
        &+ \frac{\mu}{d}\sum_{i=1}^d h\parens*{\frac{\sqrt{d}\Delta_i + \theta_i^*}{v_i}, v_i, \theta_i^*}.
    \end{aligned}
    \end{equation}
    Before proceeding further, we rewrite this in terms of a minimization over the variable $\blu  = \frac{\sqrt{d}\blDelta + \bltheta^*}{\blv}$.
    \begin{equation}\label{eq:P2stronglyconvex}
        \begin{aligned}
        P_2(\mu) \overset{d}{=}  \min_{\blu \in B_u} \max_{0 \leq \tau \leq R} \frac{2\tau \normt{\blh}}{\sqrt{n}} \sqrt{\sigma^2 + \frac{1}{d}\normt*{\blu \odot \blv - \bltheta^*}^2} &- \frac{2 \tau}{\sqrt{nd}}  \blg^\top \parens*{\blu \odot \blv - \bltheta^*}\\ &- \tau^2 + \frac{\lambda}{d}\normt*{\blu}^2 
        + \frac{\mu}{d}\sum_{i=1}^d h\parens{u_i, v_i, \theta^*_i}
    \end{aligned}
    \end{equation}

    Here, $B_u \coloneqq \braces*{\blu \in \R^d \colon \norm*{\frac{1}{\sqrt{d}}(\blu \odot \blv - \bltheta^*)}_2 \leq R}$. After this step, observe that the objective function is strongly concave in $\tau$ and strongly convex in $\blu$ (the sum of the last two terms is strongly convex in $\blu$ based on the assumption that $h$ has bounded Hessian and $\mu$ is sufficiently small). Since the objective function is convex-concave over convex and compact sets, we can invoke Sion's minimax theorem to switch the $\min$ and $\max$. Furthermore, we use the fact that $\sqrt{x} = \min_{\beta>0} \frac{x}{2\beta} + \frac{\beta}{2}$ to write this as
    \begin{equation}\label{P2unconstrained}
        \begin{aligned}
        P_2(\mu) \overset{d}{=}  \max_{0 \leq \tau \leq R} \min_{\substack{\blu \in B_u, \\ \sigma \leq \beta \leq \sigma+R }}  \frac{\tau \normt{\blh}}{\sqrt{n}} \parens*{\frac{\sigma^2}{\beta} + \frac{\normt*{\blu \odot \blv - \bltheta^*}^2}{\beta d} + \beta} &- \frac{2 \tau}{\sqrt{nd}}  \blg^\top \parens*{\blu \odot \blv - \bltheta^*}\\ &- \tau^2 + \frac{\lambda}{d}\normt*{\blu}^2 
        + \frac{\mu}{d}\sum_{i=1}^d h\parens{u_i, v_i, \theta^*_i}.
        \end{aligned}
    \end{equation}
    Here, note that we can add the constraint on $\beta$ without changing the solution since the optimal value of $\beta$ will be obtained at $\sqrt{\sigma^2 + \frac{1}{d}\normt*{\blu \odot \blv - \bltheta^*}^2} \in [\sigma, \sigma+R]$ for all feasible $\blu$. 
    
    Next, we can explicitly solve the inner minimization over $\blu$. To do this, we first show in Lemma \ref{lem:unconst_u} that the optimal solution to the unconstrained minimization is strictly feasible for large enough $C_1$, and hence, the unconstrained and constrained minimizations over $\blu$ are equivalent. Next, observe that the unconstrained problem is separable over the indices, so we need only to solve the scalar problem
    \begin{align}\label{eq:unconst_u}
        \min_{u_i \in \R} \frac{\tau \normt{\blh} (u_i v_i - \theta^*_i)^2}{\beta d \sqrt{n}} - \frac{2 \tau}{\sqrt{nd}}  g_i (u_i v_i - \theta^*_i) + \frac{\lambda}{d}u_i^2 + \frac{\mu}{d} h\parens*{u_i, v_i, \theta^*_i} 
    \end{align}
    Completing the squares, we obtain that the above problem can be written in terms of the Moreau envelope  (Definition \ref{def:moreau}) of the function $\ell(u) = \lambda u^2 + \mu h$: 
    \[
    \frac{1}{d} \brackets*{\frac{\tau \xi \theta_i^{*2}}{\beta} - \frac{\tau}{\beta \xi}\parens*{\beta g_i \sqrt{\kappa} - \xi \theta_i^*}^2 + \scrM_{\lambda (\cdot)^2 + \mu h(\cdot, v_i, \theta^*_i)}\parens*{\frac{\xi \theta_i^* - \beta g_i \sqrt{\kappa}}{\xi v_i} ; \frac{\beta}{2 \xi \tau v_i^{(t)2}}}},
    \]
    where we have introduced the shorthand notation $\xi =  \frac{ \normt{\blh}}{\sqrt{n}}$. Substituting this into the expression for $P_2$ above, we obtain
    \begin{equation}\label{eq:finitefunc}
    \begin{alignedat}{2}
        P_2(\mu) &\overset{d}{=}  \max_{0 \leq \tau \leq R} \min_{\sigma \leq\beta \leq \sigma+R} \frac{\tau \sigma^2 \xi}{\beta} &&+ \tau\beta\xi - \tau^2 + \frac{1}{d}\sum_{i=1}^d \brackets*{\frac{\tau \xi \theta_i^{*2}}{\beta} - \frac{\tau}{\beta \xi}\parens*{\beta g_i \sqrt{\kappa} - \xi \theta_i^*}^2} \\
        &\qquad &&+ \frac{1}{d}\sum_{i = 1}^d \brackets*{\scrM_{\lambda (\cdot)^2 + \mu h(\cdot, v_i, \theta^*_i)}\parens*{\frac{\xi \theta^*_i - \beta g_i \sqrt{\kappa}}{\xi v_i} ; \frac{\beta}{2 \xi \tau v_i^{(t)2}}}}\\
        &\coloneqq  \max_{0 \leq \tau \leq R} \min_{\sigma \leq\beta \leq \sigma+R}  f_d(\tau, \beta)
    \end{alignedat}
    \end{equation}

    Now that the optimization has been fully ``scalarized'', we proceed by considering its asymptotic behavior. First, note that the partial minimization over $\blu$ preserves the concavity/convexity in $(\tau, \beta)$. In Lemma \ref{lem:limit}, we prove that for any fixed $\tau$ and $\beta$, the objective function $f_d(\tau, \beta)$ converges in probability to  
    \begin{align}\label{eq:limitfunc}
       f(\tau, \beta) =  \frac{\tau \sigma^2}{\beta} + \tau\beta(1-\kappa) - \tau^2 + \E \brackets*{\scrM_{\lambda (\cdot)^2 + \mu h(\cdot, V, \Theta)}\parens*{\frac{\Theta - \beta G \sqrt{\kappa}}{V} ; \frac{\beta}{2 \tau V^2}}},
    \end{align}
    where the expectation is over $(V,\Theta) \sim \Pi_t$ and an independent $G \sim \scrN(0,1)$. We note here that Lemma \ref{lem:limit} is the only place in our proof which requires the boundedness of the entries of $\bltheta^*$. 
    
    Since $f_d(\tau, \beta)$ is strongly concave in $\tau$ with parameter $1$ for all feasible $\beta$, we can conclude that $f(\tau, \beta)$ is also strongly concave in $\tau$ with parameter $1$. Directly taking a derivative with respect to $\beta$, we also find that $f$ has a single non-negative critical point, at the point $\betahat = \sqrt{\sigma^2 + \E [(\uhat V - \Theta)^2]}$, where
    \[
    \uhat = \uhat(V,\Theta) = \text{prox}_{\lambda (\cdot)^2 + \mu h(\cdot, V, \Theta)}\parens*{\frac{\Theta - \beta G \sqrt{\kappa}}{V} ; \frac{\beta}{2 \tau V^2}}.
    \]
    % Moreover, \cite[Lemma 4.4]{thrampoulidis2018precise} 
    % shows that the expected Moreau envelope is strictly convex over $\beta >0$. In particular, if we condition on $V$, then the expectation of the Moreau envelope term (over $G$ and $\Theta\given V$) takes exactly the form required by this lemma, and we can conclude that this expectation is strictly convex in $\beta$ for almost all $V$. Finally, taking the marginal expectation over $V$ preserves the strict convexity. 
    So, we can conclude that $f$ has unique saddle point $(\tauhat, \betahat)$. Note $f$ is a deterministic function that does not depend on $d$ and hence $(\tauhat, \betahat)$ are also deterministic and independent of $d$.
    
    Now let $C_2 \coloneqq \max\braces{\tauhat, \betahat} + 1$. By the ``convexity lemma'' (as stated in \cite{pollard1991asymptotics}), pointwise convergence (in probability) of a convex function is uniform over compact sets. So, this result implies that the convergence is uniform over $(\tau, \beta) \in [0, C_2] \times [0, C_2]$,\footnote{Note we could not have directly applied this to the feasible sets of $P_2(\mu)$, since $R$ may have a dependence on $d$.} so
    \[
    \max_{0 \leq \tau \leq C_2} \min_{0 \leq \beta \leq C_2} f_d(\tau, \beta) \overset{P}{\to} \max_{0 \leq \tau \leq C_2} \min_{0 \leq \beta \leq C_2} f(\tau, \beta)
    \]
    
    Let $(\tauhat_d, \betahat_d)$ denote the optimnal solution for the problem on the left. We can also conclude that $(\tauhat_d, \betahat_d) \overset{P}{\to} (\tauhat, \betahat)$ by \cite[Theorem 2.1]{newey1994large}, which states that uniform convergence in probability of a convex function over a compact set implies convergence of the optimal minimizer. So, with probability approaching 1, $(\tauhat_d, \betahat_d)$ are strictly smaller than $C_2$, and the same solution is also optimal for $P_2(\mu)$. We can therefore conclude 
    \[
    P_2(\mu) \overset{P}{\to} \max_{0 \leq \tau \leq C_2} \min_{0 \leq \beta \leq C_2} f(\tau, \beta) = \max_{\tau \geq 0} \min_{\beta \geq 0} f(\tau, \beta) =\colon \bar{P}(\mu)
    \]

    Therefore, by Theorem \ref{thm:cgmt}, for any fixed $\mu \in [-\mu^*, \mu^*]$, we have the convergence
    \[
    P_1(\mu) \overset{P}{\to} \bar{P}(\mu).
    \]
    In the special case $\mu = 0$, we can further simplify the Moreau envelope term to obtain
    \begin{align}\label{eq:Pbar}
        \bar{P}(0) \coloneqq \max_{\tau \geq 0} \min_{\beta \geq 0}  \frac{\tau \sigma^2}{\beta} + \tau\beta(1-\kappa) - \tau^2 + \tau \lambda \E \brackets*{\frac{\Theta^2 + \beta^2 \kappa}{\tau V^2 + \beta \lambda}}.
    \end{align}
    \paragraph{Step 2: Convergence of the optimal solution}
    We next need to extend this result to the desired Wasserstein-2 convergence result (\ref{eq:desiredconv}). Recall here that $\utt$ is the solution of $P_1(0)$. 
    
    First, let $h \colon \R^3 \to \R$ be any bounded, Lipschitz function, and let $h^{(k)}$ be a sequence of bounded, twice-differentiable functions that converge uniformly to $h$ as $k \to \infty$ (e.g., the convolution of $h$ with a sequence of mollifiers). Let $P_1^{(k)}(\mu), \bar{P}^{(k)}(\mu)$ be the optimal cost of $P_1$ and $\bar{P}$ when using test function $h^{(k)}$ and for $\mu \in [-\mu^*, \mu^*]$. Note the convergence $P_1^{(k)}(\mu) \overset{P}{\to} \bar{P}^{(k)}(\mu)$ for any $\mu$ in a sufficiently small neighborhood around zero holds by Step 1. 
    
    By the uniform convergence of the $h^{(k)}$ to $h$, 
    \begin{align*}
    \lim_{k \to \infty} P_1^{(k)}(\mu) &= P_1(\mu)
    \\
    \lim_{k \to \infty} \bar{P}^{(k)}(\mu) &= \bar{P}(\mu).
    \end{align*}
    Now, fix $\delta > 0$ and choose $k$ large enough that $\abs{P_1^{(k)}(\mu) - P_1(\mu)} < \frac{\delta}{3}$ and  $\abs{\bar{P}^{(k)}(\mu) - \bar{P}(\mu)} < \frac{\delta}{3}$. Then, 
    \[
    \P\braces*{\abs{P_1(\mu) - \bar{P}(\mu)} > \delta} \leq \P\braces*{\abs{P_1^{(k)}(\mu) - \bar{P}^{(k)}(\mu)} > \delta/3} \to 0,
    \]
    since $P_1^{(k)} \overset{P}{\to} \bar{P_2}^{(k)}$ for all $k$. Hence, we can also apply the result of Step 1 to any bounded Lipschitz function $h$.
    
    Since the convergence  result of Step 1 holds for any $\mu$ in a neighborhood around zero, we can conclude that
    \[
    \frac{1}{d}\sum_{i=1}^d h(u^{(t+1)}_i, v_i, \theta^*_i) \overset{P}{\to} \left. \frac{d\bar{P}(\mu)}{d\mu}\right \vert_{\mu = 0},
    \]
    where the derivative is well-defined since $\bar{P}$ has a unique solution in a neighborhood around zero. The proof of this fact is identical to that of Lemma 7 of \cite{bosch2023random}, so we omit it here. Moreover, using the Dominated Convergence Theorem to differentiate inside the expectation, we can compute this exactly:
    \[
    \left. \frac{d\bar{P}(\mu)}{d\mu}\right \vert_{\mu = 0} = \E h\parens*{\text{prox}_{\lambda (\cdot)^2}\parens*{\frac{\Theta - \betahat G \sqrt{\kappa}}{V} ; \frac{\betahat}{2 \tauhat V^2}}, V, \Theta} = \E h\parens*{\frac{\tauhat V(\Theta - \betahat G \sqrt{\kappa})}{\tauhat V^2 + \betahat \lambda}, V, \Theta},
    \]
    where $(\betahat, \tauhat)$ are found in the optimal solution to $\bar{P}(0)$.
    
    Hence, for all bounded, Lipschitz $h$, we have 
    \[
    \frac{1}{d}\sum_{i=1}^d h(u^{(t+1)}_i, v_i, \theta^*_i) \overset{P}{\to} \E h\parens*{\frac{\tauhat V(\Theta - \betahat G \sqrt{\kappa})}{\tauhat V^2 + \betahat \lambda}, V, \Theta}, 
    \]
    so the empirical distribution of the triple $(u_i^{(t+1)}, v_i, \theta^*_i)$ converges weakly to the distribution of the random variable $(\frac{\tauhat V (\Theta - \betahat G \sqrt{\kappa})}{\tauhat V^2 + \betahat \lambda}, V, \Theta)$, where $G \sim \scrN(0,1)$ and $(V, \Theta) \sim \Pi_t$. By choosing $h(u, v, \theta) = u^2$, we also know that second moments of the empirical distribution converge in probability. Hence, the convergence can be strengthened from weak convergence to convergence in $\scrW_2$ distance (see, e.g. \cite[Theorem 6.9]{villani2009optimal}). 

    \paragraph{Step 3: Verifying the inductive hypothesis}
    Lastly, we need to show that 
    \[
    \frac{1}{d}\sum_{i=1}^d \delta(v_i^{(t+1)}, \theta^*_i) \overset{\scrW_2}{\to} \text{Law}(\psi(Q_{t+1}, V), \Theta) \coloneqq \Pi_{t+1}. 
    \]
    Here, weak convergence follows from the result of Step 2 since $\psi$ is a continuous map. To show convergence of second moments, we need to show
    \[
    \frac{1}{d}\sum_{i=1}^d v_i^{(t+1),2} = \frac{1}{d}\sum_{i=1}^d \psi(u_i^{(t+1)}, v_i)^2 \overset{P}{\to} \E[\psi(Q_{t+1}, V)^2]. 
    \]
    For $\psi$ that satisfy Assumption $2$, this convergence is immediate from the result of Step 2 (since $\scrW_2$ convergence implies convergence in expectation of bounded continuous and PL(2) functions). Therefore, the initial inductive assumption made at the beginning of this proof holds at time $t+1$, and we can apply the result inductively to conclude Theorem \ref{thm:main}.
\end{proof}

\begin{proof}[Proof of Theorem \ref{thm:group}]
The proof is an extension of the proof of Theorem \ref{thm:main} to the case where the test function $h$ acts on blocks rather than individual entries. Much of the proof is identical, so we only sketch the argument and highlight the major differences here. We begin with the inductive hypothesis that $\frac{1}{M}\sum_{i=1}^M \delta(\blv^{(t)}_i, \bltheta^*_i) \overset{\scrW_2}{\to} \Pi_t$, for a known distribution $\Pi_t$ over $\R^b \times \R^b$. Recall here that $M$ denotes the number of blocks/factors of size $b$ (so $M = \frac{d}{b}$).

Then, let $h \colon (\R^b)^3 \to \R$ be a test function with $\normt{\nabla^2 h} \leq C$ and such that either $h$ is bounded or $h(\blu_i, \blv_i, \bltheta_i) = \normt{\blu_i}^2$. We consider a similar perturbed optimization problem:
\begin{align}\label{eq:P1group}
P_1(\mu) &= \min_{\blu \in \R^d} \frac{1}{n} \normt*{\bly - \frac{1}{\sqrt{d}}\blX (\blu \odot \blv)}^2 + \frac{\lambda}{d}\normt*{\blu}^2 + \frac{\mu}{M}\sum_{i=1}^M h(\blu_i, \blv_i, \bltheta_i^*).
\end{align}
Again, we consider this for $|\mu| \leq \frac{\lambda}{bC}$, so that the optimization problem is $\frac{\lambda}{d}$ strongly convex in $\blu$. Noting that the proof of Lemma \ref{lem:compactP1} still holds in this grouped case, we can constrain $P_1(\mu)$ to be over compact sets and apply the CGMT to obtain the auxiliary problem
\begin{align}
    P_2(\mu) =  \min_{\blDelta \in B_\Delta} \max_{\blq \in B_q} \frac{2}{\sqrt{n}} \blq^\top \blepsilon - \frac{2}{\sqrt{n}} \normt{\blq}  \blg^\top \blDelta - \frac{2}{\sqrt{n}} \normt{\blDelta}  \blh^\top \blq  &- \normt{\blq}^2 + \frac{\lambda}{d}\normt*{\frac{\sqrt{d}\blDelta + \bltheta^*}{\blv}}^2 \\
    &+  \frac{\mu}{M}\sum_{i=1}^M h(\blu_i, \blv_i^{(t)}, \bltheta_i^*).
\end{align}
The sequence of ``scalarization'' steps on $P_2$ is identical to in Theorem \ref{thm:main}, until we arrive at
\begin{equation}
    \begin{aligned}
    P_2(\mu) \overset{d}{=}  \max_{0 \leq \tau \leq R} \min_{\substack{\blu \in B_u, \\ \sigma \leq \beta \leq \sigma+R }}  \frac{\tau \normt{\blh}}{\sqrt{n}} \parens*{\frac{\sigma^2}{\beta} + \frac{\normt*{\blu \odot \blv - \bltheta^*}^2}{\beta d} + \beta} &- \frac{2 \tau}{\sqrt{nd}}  \blg^\top \parens*{\blu \odot \blv - \bltheta^*}\\ &- \tau^2 + \frac{\lambda}{d}\normt*{\blu}^2  + \frac{\mu}{M}\sum_{i=1}^M h(\blu_i, \blv_i^{(t)}, \bltheta_i^*).
    \end{aligned}
\end{equation}
Here, since the sum of the last two terms in the objective function is $\frac{\lambda}{d}$ strongly convex by our choice of $\mu$, the proof of Lemma \ref{lem:unconst_u} holds without change and we can consider the unconstrained minimization over $\blu$. In this case, the minimization is \textit{block-separable} over each of the $M$ factors of $\blu$, so it can be expressed as
\begin{align*}
&\frac{1}{d} \sum_{i=1}^M \min_{\blu_i \in \R^b} \braces*{\frac{\tau \xi}{\beta}\normt{\blu_i \odot \blv_i - \bltheta^*_i}^2  - 2\tau\sqrt{\kappa}\blg_i^\top(\blu_i \odot \blv_i - \bltheta) + \lambda \normt{\blu}^2 + \mu b h(\blu_i, \blv_i, \bltheta^*_i)}\\
&=\frac{1}{bM} \sum_{i=1}^M \min_{\blu_i \in \R^b} \braces*{\frac{\tau \xi}{\beta}\normt{\blu_i \odot \blv_i - \bltheta^*_i}^2  - 2\tau\sqrt{\kappa}\blg_i^\top(\blu_i \odot \blv_i - \bltheta) + \lambda \normt{\blu}^2 + \mu b h(\blu_i, \blv_i, \bltheta^*_i)}\\
&\coloneqq \frac{1}{bM} \sum_{i=1}^M q(\blv_i, \bltheta^*_i, \blg_i),
\end{align*}
where $\blg_i \in \R^b$ denotes the $i$th block of $\blg$ and $q \coloneqq (\R^b)^3 \to \R$ is defined as a shorthand for the quantity inside the summation.

Next, we consider the asymptotic behavior of $P_2(\mu)$. Here, the only term which is different than in Theorem \ref{thm:main} is the term $\frac{1}{bM} \sum_{i=1}^M q(\blv_i, \bltheta^*_i, \blg_i)$. By the same argument as in Lemma \ref{lem:limit}, we can write $q$ as the Moreau envelope of a convex function and show that
\[
\frac{1}{bM} \sum_{i=1}^M q(\blv_i, \bltheta^*_i, \blg_i) \overset{P}{\to} \E \frac{1}{b} q(\blV, \blTheta,\blG),
\]
where the expectation is over $(\blV, \blTheta) \sim \Pi_t$ and $\blG \sim \scrN(\mathbf{0}, \blI_b)$. After the same uniform convergence argument as in the proof of Theorem \ref{thm:main}, we can conclude that for all $\mu \in [-\mu^*, \mu^*]$, $P_1(\mu) \overset{P}{\to} \bar{P}(\mu)$, where 
\[
 \bar{P}(\mu) = \max_{\tau \geq 0} \min_{\beta \geq 0} \frac{\tau \sigma^2}{\beta} + \tau\beta - \tau^2 + \E \frac{1}{b} q(\blV, \blTheta,\blG).
\]
In particular, when $\mu = 0$, the minimization implicit in the definition of $q$ can be solved exactly; this yields exactly the optimization problem in \ref{eq:group-recursion}. Step 2 of the proof (convergence of test functions of the optimal minimizer) is identical to that of Theorem \ref{thm:main}, and for the final step (showing the inductive hypothesis holds at the next iteration), we need to argue that the second moment of $\blv^{(t+1)} = \psi(\blu_i^{(t+1)}, \blv_i)$ converges to its expectation under $\Pi_{t+1}$:
\[
\frac{1}{M}\sum_{i=1}^M \normt{\psi(\blu_i^{(t+1)}, \blv_i)}^2 \overset{P}{\to} \E\normt{\psi(\blQ_{t+1}, \blV)}^2
\]
If $\psi$ is bounded and continuous or has $\text{PL}(2)$ coordinate projections (as we have assumed), then the above convergence holds based on the Wasserstein-2 convergence of the joint distribution of $(\blu^{(t+1)}, \blv)$.

\end{proof}

\section{Technical lemmas}\label{app:technical-lem}
\begin{proposition}\label{prop:testloss}
The function $g(u,v, \theta) = |uv-\theta|$ is pseudo-Lipschitz of order 2.
\end{proposition}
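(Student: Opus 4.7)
The plan is to verify the pseudo-Lipschitz condition $|g(\mathbf{x}) - g(\mathbf{y})| \leq C(1 + \|\mathbf{x}\|_2 + \|\mathbf{y}\|_2)\|\mathbf{x}-\mathbf{y}\|_2$ directly from the definition, where $\mathbf{x} = (u_1,v_1,\theta_1)$ and $\mathbf{y} = (u_2,v_2,\theta_2)$. The reason this function lies in PL(2) is that while $|uv - \theta|$ itself is \emph{not} Lipschitz (due to the bilinear term $uv$), the local Lipschitz constant grows only linearly in the magnitude of the inputs, which is exactly what PL(2) permits.

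First I would apply the reverse triangle inequality to reduce
\[
\bigl| |u_1 v_1 - \theta_1| - |u_2 v_2 - \theta_2| \bigr| \leq |u_1 v_1 - u_2 v_2| + |\theta_1 - \theta_2|.
\]
The second term is bounded by $\|\mathbf{x}-\mathbf{y}\|_2$. For the first (bilinear) term, the standard add-subtract trick gives
\[
u_1 v_1 - u_2 v_2 = u_1(v_1 - v_2) + v_2(u_1 - u_2),
\]
so $|u_1 v_1 - u_2 v_2| \leq |u_1|\,|v_1-v_2| + |v_2|\,|u_1 - u_2|$. Bounding $|u_1| \leq \|\mathbf{x}\|_2$ and $|v_2| \leq \|\mathbf{y}\|_2$, and $|v_1 - v_2|, |u_1 - u_2| \leq \|\mathbf{x}-\mathbf{y}\|_2$, yields
\[
|u_1 v_1 - u_2 v_2| \leq \bigl(\|\mathbf{x}\|_2 + \|\mathbf{y}\|_2\bigr)\|\mathbf{x}-\mathbf{y}\|_2.
\]
Combining the two bounds produces the PL(2) inequality with constant $C = 1$.

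There is no real obstacle here; the only subtlety is that the bilinear interaction $uv$ is what forces us to go one order above Lipschitz (i.e., into PL(2) rather than PL(1)). The entire argument is two applications of the triangle inequality plus the elementary identity for the difference of products, so the proof will be short.
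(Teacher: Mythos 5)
Your proof is correct and follows essentially the same route as the paper: reverse triangle inequality, the add--subtract identity $u_1v_1 - u_2v_2 = u_1(v_1-v_2) + v_2(u_1-u_2)$, and bounding the coefficients by the vector norms. The only difference is bookkeeping — the paper passes through $\ell_1$ norms and ends with constant $3$, while your direct $\ell_2$ bounds give $C=1$.
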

    \begin{proof}
     The result follows from the following series of inequalities:
    \begin{align*}
    \abs*{|uv-\theta| - |u'v' - \theta'|} &\leq \abs*{uv-\theta - (u'v' - \theta')}\\
    &\leq \abs{uv - u'v'} + \abs{\theta - \theta'}\\
    &\leq \abs{u}\abs{v - v'} + \abs{v'}\abs{u-u'} + \abs{\theta - \theta'}\\
    &\leq (\abs{u} + \abs{v'} + 1)(\abs{u-u'} + \abs{v-v'} + \abs{\theta-\theta'})\\
    &\leq (1 + \norm{\blx}_1 + \norm{\blx'}_1)\norm{\blx-\blx'}_1\\
    &\leq 3(1+ \normt{\blx} + \normt{\blx'})\normt{\blx - \blx'},
    \end{align*}
    where $\blx, \blx' \in \R^3$ denote $(u,v,\theta)$ and $(u',v', \theta')$, respectively.
    \end{proof}

\begin{lemma}\label{lem:compactP1}
    Let $\blDelta^*, \blq^*$ be the optimal solution to (\ref{eq:P1unconstrained}). Then, there exists universal constant $C_1 > 0$ such that 
    \[
    \lim_{d \to \infty} \P\braces*{\normt{\blDelta^*} \leq C_1 \norm{\blv}_\infty} = 
    \lim_{d \to \infty} \P\braces*{\normt{\blq^*} \leq C_1 \norm{\blv}_\infty} = 1.
    \]
\end{lemma}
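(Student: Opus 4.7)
The plan is to first derive a high-probability bound on the primal solution $\blu^*$ of (\ref{eq:P1}) via strong convexity, then translate this into the desired bounds on $\blDelta^*$ and $\blq^*$ via their algebraic relations to $\blu^*$. The perturbed primal objective $F(\blu)$ is strongly convex: by the restriction $|\mu| \leq \mu^* = \lambda/C$ together with the Hessian bound $\|\nabla^2 h\|\leq C$, the penalty $\frac{\lambda}{d}\|\blu\|_2^2 + \frac{\mu}{d}\sum_i h(u_i,v_i,\theta_i^*)$ is at least $\frac{\lambda}{2d}$-strongly convex, and the data-fit term is convex. Hence
\[
\tfrac{\lambda}{4d}\|\blu^*\|_2^2 \;\leq\; F(\mathbf{0}) - F(\blu^*).
\]
To bound the right-hand side, I would control $F(\mathbf{0}) = \frac{1}{n}\|\bly\|_2^2 + \frac{\mu}{d}\sum_i h(0,v_i,\theta_i^*)$ from above using standard Gaussian concentration: $\frac{1}{n}\|\blepsilon\|_2^2 \to \sigma^2$ in probability, and since $\bltheta^*$ has bounded entries, $\frac{1}{nd}\|\blX\bltheta^*\|_2^2$ concentrates around $\frac{1}{d}\|\bltheta^*\|_2^2 = O(1)$. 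A matching lower bound on $F(\blu^*)$ is either the trivial bound $-|\mu|\cdot\|h\|_\infty$ (when $h$ is bounded) or $0$ after absorbing the $\mu u^2$ term into the regularizer (when $h(u,v,\theta)=u^2$). This gives $\|\blu^*\|_2 = O_P(\sqrt{d})$.

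Next, using $\blDelta^* = \frac{1}{\sqrt{d}}(\blu^* \odot \blv - \bltheta^*)$, the triangle inequality yields
\[
\|\blDelta^*\|_2 \;\leq\; \tfrac{\|\blv\|_\infty}{\sqrt{d}}\|\blu^*\|_2 + \tfrac{1}{\sqrt{d}}\|\bltheta^*\|_2 \;\leq\; C\|\blv\|_\infty + C',
\]
where the second inequality uses the boundedness of the entries of $\bltheta^*$. Since the limiting distribution $\Pi_t$ has nonzero essential supremum (which is propagated through the iterates by Assumption \ref{assump:psi}(1) starting from the assumed nonzero $\blv^{(0)}$), $\|\blv\|_\infty$ is bounded below by a positive constant with probability tending to $1$, which absorbs the additive $C'$ into the coefficient of $\|\blv\|_\infty$. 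For $\blq^*$, differentiating the objective in (\ref{eq:P1unconstrained}) with respect to $\blq$ gives the closed form $\blq^* = \frac{1}{\sqrt{n}}(\blepsilon - \blX\blDelta^*)$, so
\[
\|\blq^*\|_2 \;\leq\; \tfrac{1}{\sqrt{n}}\|\blepsilon\|_2 + \tfrac{1}{\sqrt{n}}\|\blX\|_{\mathrm{op}}\|\blDelta^*\|_2 \;=\; O_P(1) + O_P(1)\cdot\|\blDelta^*\|_2,
\]
using the standard Bai--Yin bound $\|\blX\|_{\mathrm{op}}/\sqrt{n} \to 1+\sqrt{\kappa}$ and chi-squared concentration for $\|\blepsilon\|_2$. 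Choosing $C_1$ large enough to absorb all implicit constants then gives the desired bound for both $\blDelta^*$ and $\blq^*$.

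The main technical obstacles are minor but worth flagging: (i) unifying the lower bound on $F(\blu^*)$ across the bounded and quadratic cases of $h$, which requires a slightly different treatment of the $\mu h$ term in each case; and (ii) justifying the passage from a bound of the form $C(\|\blv\|_\infty + 1)$ to one of the form $C_1\|\blv\|_\infty$, which depends on a non-degeneracy property of the limiting distribution $\Pi_t$. The latter is the only step that uses the inductive structure of the algorithm beyond the fixed iteration under analysis, and it follows directly from the preservation of nonzeroness under $\psi$ guaranteed by Assumption \ref{assump:psi}(1).
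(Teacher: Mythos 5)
Your proposal is correct and follows the same overall strategy as the paper: bound $\normt{\blu^*}\lesssim\sqrt d$ via strong convexity plus Gaussian norm concentration, then pass to $\blDelta^*$ by the triangle inequality and to $\blq^*$ through the closed form $\blq^*=\frac{1}{\sqrt n}(\blepsilon-\blX\blDelta^*)$. The one substantive difference is in how strong convexity is deployed: the paper uses only the $\frac{\lambda}{d}$-strong convexity of the regularizer $R$ and expands to first order at $\mathbf{0}$, which forces it to control $\normt{\nabla R(\mathbf{0})}\lesssim 1/\sqrt d$ via the pseudo-Lipschitz convergence of $\partial_u h(0,v,\theta)$; you instead use strong convexity of the full perturbed objective around its minimizer (where the gradient vanishes), which removes the gradient term but requires a lower bound on the objective at $\blu^*$, handled by splitting into the bounded-$h$ and $h=u^2$ cases. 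Both routes are valid, and yours is arguably a bit more elementary since it bypasses the gradient-at-zero estimate; your explicit justification for absorbing the additive $\frac{1}{\sqrt d}\normt{\bltheta^*}$ term into $C_1\norm{\blv}_\infty$ (via $\norm{\blv}_\infty$ being bounded below with probability tending to one, using the non-degeneracy of the $V$-marginal of $\Pi_t$) addresses a step the paper passes over silently, and is indeed the right way to reconcile the bound with the lemma's statement.
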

\begin{proof}
We proceed via a similar argument to Lemma 2 in \cite{bosch2023random}. First, consider the original expression for $P_1(\mu)$ from (\ref{eq:P1}) :
\begin{align*}
P_1(\mu) &= \min_{\blu \in \R^d} F(\blu) + R(\blu),
\end{align*}
where $F(\blu) \coloneqq \frac{1}{n} \normt*{\bly - \frac{1}{\sqrt{d}}\blX (\blu \odot \blv)}^2$ and $R(\blu) \coloneqq \frac{\lambda}{d}\normt*{\blu}^2 + \frac{\mu}{d}\sum_{i=1}^d h(u_i, v_i^{(t)}, \theta_i^*)$. Recall here that $R$ is $\frac{\lambda}{d}$-strongly convex for all $\mu \in [-\mu^*, \mu^*]$, and denote the unique optimal minimizer to this problem as $\blu^*$. Then, the following chain of inequalities holds, by the optimality of $\blu^*$ and the non-negativity of $F$. 
\[
\frac{1}{n}\normt{\bly}^2 + R(\mathbf{0}) =F(\mathbf{0})+R(\mathbf{0}) \geq F(\blu^*) + R(\blu^*) \geq R(\blu^*).
\]
Moreover, by the strong convexity of $R$, we have
\[
R(\blu^*) \geq R(\mathbf{0}) + \nabla R(\mathbf{0})^\top \blu^* + \frac{\lambda}{2d}\normt{\blu^*}^2.
\]
Combining the above two series of inequalities, we obtain (recall $\kappa = \frac{d}{n})$
\[
\normt{\blu^*}^2 + \frac{2d}{\lambda}  \nabla R(\mathbf{0})^\top \blu^* \leq \frac{2\kappa}{\lambda}\normt{\bly}^2.
\]
After completing the square, this yields
\[
\normt*{\blu^* + \frac{d}{\lambda}\nabla R(\mathbf{0})}^2  \leq \frac{2\kappa}{\lambda}\normt{\bly}^2 + \frac{d^2}{\lambda^2}\normt{\nabla R(\mathbf{0})}^2,
\]
whence, by the triangle inequality,
\begin{align*}
\normt{\blu^*} &\leq \frac{d}{\lambda}\normt{\nabla R(\mathbf{0})} + \sqrt{\frac{2\kappa}{\lambda}\normt{\bly}^2 + \frac{d^2}{\lambda^2}\normt{\nabla R(\mathbf{0})}^2}\\
&\leq \frac{2d}{\lambda}\normt{\nabla R(\mathbf{0})} + \sqrt{\frac{2\kappa}{\lambda}}\normt{\bly}.
\end{align*}
Here, standard concentration inequalities for Gaussian random variables (e.g., \cite[Theorem 5.2.2, Corollary 7.3.3]{vershynin2018high}) imply that, with probability approaching 1, $\normt{\blX} \lesssim \sqrt{d}$ and $\normt{\blepsilon} \lesssim \sqrt{d}$. And Assumption \ref{assump:init} implies that $\normt{\bltheta^*} \lesssim \sqrt{d}$ with probability tending to 1. So,
\[
\normt{\bly} \leq \frac{\mu}{\sqrt{d}}\norm{\blX}\normt{\bltheta^*} + \normt{\blepsilon} \lesssim \sqrt{d}
\]
with probability approaching 1. Next, we bound $\normt{\nabla R(\mathbf{0})}$. Recalling the definition of $R$,
\begin{align*}
\normt{\nabla R(\mathbf{0})} &= \frac{\mu}{d} \sqrt{\sum_{i=1}^d \parens*{\frac{\partial}{\partial u}h(u, v_i, \theta^*_i) \bigg\vert_{u=0}}^2} = \frac{1}{\sqrt{d}} \sqrt{\frac{1}{d}\sum_{i=1}^d \parens*{\frac{\partial}{\partial u}h(u, v_i, \theta^*_i) \bigg\vert_{u=0}}^2}.
\end{align*}
Since the function $g(v, \theta) = \frac{\partial}{\partial u}h(u, v, \theta) \bigg\vert_{u=0}$ is Lipschitz (by the fact that $h$ has bounded second derivatives), $g^2$ is pseudo-Lipschitz of order 2. So, the quantity under the square root converges in probability to $\E_{(V,\Theta) \sim \Pi_t} g^2$ by Assumption \ref{assump:init}, and, with probability tending to $1$, we have $\normt{\nabla R(\mathbf{0})} \lesssim 1/\sqrt{d}$.

Combining the above bounds on $\normt{\bly}$ and $\normt{\nabla R(\mathbf{0})}$, we can conclude that $\normt{\blu^*} \lesssim \sqrt{d}$. The first part of the lemma follows by noting that 
\begin{align*}
\normt{\blDelta^*} = \frac{1}{\sqrt{d}}\normt{\blu^* \odot \blv - \bltheta^*} &\leq \frac{1}{\sqrt{d}}\normt{\blu^* \odot \blv} + \frac{1}{\sqrt{d}}\normt{\bltheta^*}\\
&\leq \frac{1}{\sqrt{d}}\norm{\blv}_\infty \normt{\blu^*} +\frac{1}{\sqrt{d}}\normt{\bltheta^*}\\
&\lesssim \norm{\blv}_\infty,
\end{align*}
where the last inequality holds with probability approaching 1. Lastly, the optimal $\blq$ for any $\blDelta$ has closed-form $\blq = \frac{1}{\sqrt{n}}\blepsilon - \frac{1}{\sqrt{n}}\blX\blDelta$. By the triangle inequality, we then obtain
\[
\normt{\blq^*} \leq \frac{1}{\sqrt{n}}\normt{\blepsilon} + \frac{1}{\sqrt{n}}\norm{\blX}\normt{\blDelta^*} \lesssim \norm{\blv}_\infty,
\]
with the last inequality holding with probability approaching 1, by the concentration of norms of $\blepsilon$ and $\blX$ as discussed above, and the bound on $\normt{\blDelta^*}$.
\end{proof}

\begin{lemma}\label{lem:unconst_u}
Consider the following unconstrained minimization problem over $\blu \in \R^d$:
\begin{equation*}
    \begin{aligned}
        \min_{\blu \in \R^d} \max_{0 \leq \tau \leq R} \frac{2\tau \normt{\blh}}{\sqrt{n}} \sqrt{\sigma^2 + \frac{1}{d}\normt*{\blu \odot \blv - \bltheta^*}^2} &- \frac{2 \tau}{\sqrt{nd}}  \blg^\top \parens*{\blu \odot \blv - \bltheta^*}\\ &- \tau^2 + \frac{\lambda}{d}\normt*{\blu}^2 
        + \frac{\mu}{d}\sum_{i=1}^d h\parens{u_i, v_i^{(t)}, \theta^*_i}. 
    \end{aligned}
\end{equation*}
With probability approaching 1, the solution $\blu^*$ satisfies  $\normt*{\frac{1}{\sqrt{d}}(\blu^* \odot \blv - \bltheta^*)} \lesssim \norm{\blv}_\infty$. 
\end{lemma}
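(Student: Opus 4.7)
The plan is essentially to recycle the argument of Lemma~\ref{lem:compactP1}: exploit strong convexity of the objective in $\blu$ (which holds because $R(\blu) \coloneqq \frac{\lambda}{d}\normt{\blu}^2 + \frac{\mu}{d}\sum_i h(u_i, v_i, \theta^*_i)$ is $\frac{\lambda}{d}$-strongly convex for $|\mu| \leq \mu^* = \lambda/C$ by the Hessian bound on $h$) to pin the unconstrained minimizer close to the origin, and then translate a bound on $\normt{\blu^*}$ into the claimed bound on $\normt{\blu^* \odot \blv - \bltheta^*}/\sqrt{d}$ via the triangle inequality. The only genuine novelty relative to Lemma~\ref{lem:compactP1} is the inner maximization over $\tau \in [0, R]$, which turns out to be harmless.

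Write $F(\blu, \tau)$ for the objective and $G(\blu) \coloneqq \max_{\tau \in [0, R]} F(\blu, \tau)$. For each fixed $\tau \geq 0$, $F(\cdot, \tau)$ is a sum of (i)~the nonnegative multiple $2\tau(\normt{\blh}/\sqrt{n})$ of $\sqrt{\sigma^2 + \normt{\blu \odot \blv - \bltheta^*}^2/d}$, which is convex in $\blu$ because it is the Euclidean norm of the affine function $\blu \mapsto ((\blu \odot \blv - \bltheta^*)/\sqrt{d}, \sigma)$; (ii)~a linear term in $\blu$; (iii)~the $\blu$-independent constant $-\tau^2$; and (iv)~the $\frac{\lambda}{d}$-strongly convex $R$. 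Hence each $F(\cdot, \tau)$ is $\frac{\lambda}{d}$-strongly convex, and so is $G$ as a pointwise supremum of such functions; in particular $G$ admits a unique minimizer $\blu^*$. Combining (a)~optimality $G(\blu^*) \leq G(\mathbf{0})$; (b)~the lower bound $G(\blu^*) \geq F(\blu^*, 0) = R(\blu^*)$; and (c)~strong convexity of $R$, $R(\blu^*) \geq R(\mathbf{0}) + \nabla R(\mathbf{0})^\top \blu^* + \frac{\lambda}{2d}\normt{\blu^*}^2$, yields
\[
\frac{\lambda}{2d}\normt{\blu^*}^2 + \nabla R(\mathbf{0})^\top \blu^* \;\leq\; \max_{\tau \in [0, R]} F(\mathbf{0}, \tau) - R(\mathbf{0}),
\]
which is the direct analogue of the quadratic inequality used in Lemma~\ref{lem:compactP1}.

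It remains to bound the three scalars $R(\mathbf{0})$, $\normt{\nabla R(\mathbf{0})}$, and $\max_\tau F(\mathbf{0}, \tau)$ with probability tending to $1$. The first two are handled exactly as in Lemma~\ref{lem:compactP1}: the bounded-Hessian assumption forces $h$ to grow at most quadratically, so $|R(\mathbf{0})| \lesssim 1$ by the finite second moments of $\Pi_t$, and the same pseudo-Lipschitz calculation gives $\normt{\nabla R(\mathbf{0})} \lesssim 1/\sqrt{d}$. For the third, evaluating at $\blu = \mathbf{0}$ reduces $F(\mathbf{0}, \cdot)$ to the scalar concave quadratic $b\tau - \tau^2 + R(\mathbf{0})$ with slope $b = 2(\normt{\blh}/\sqrt{n})\sqrt{\sigma^2 + \normt{\bltheta^*}^2/d} + \frac{2}{\sqrt{nd}}\blg^\top \bltheta^*$, and Cauchy--Schwarz combined with the concentration of $\normt{\blg}/\sqrt{d}$ and $\normt{\blh}/\sqrt{n}$ around $1$ (plus $\normt{\bltheta^*}^2/d \lesssim 1$ from the bounded entries of $\bltheta^*$) yields $|b| \lesssim 1$ w.h.p., so $\max_\tau F(\mathbf{0}, \tau) \leq b_+^2/4 + R(\mathbf{0}) \lesssim 1$. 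Completing the square then gives $\normt{\blu^*} \lesssim \sqrt{d}$ w.h.p., and the triangle inequality
\[
\frac{1}{\sqrt{d}}\normt{\blu^* \odot \blv - \bltheta^*} \;\leq\; \frac{\norm{\blv}_\infty}{\sqrt{d}}\normt{\blu^*} + \frac{1}{\sqrt{d}}\normt{\bltheta^*} \;\lesssim\; \norm{\blv}_\infty
\]
finishes the proof, mirroring the last line of Lemma~\ref{lem:compactP1}. The only non-mechanical point is verifying that the inner $\max_\tau$ preserves strong convexity of $G$ and does not inflate $G(\mathbf{0})$; both reduce to the non-negativity of $\tau$ and the explicit scalar form of $F(\mathbf{0}, \cdot)$, so I do not expect a substantive obstacle.
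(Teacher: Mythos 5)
Your proposal is correct and follows essentially the same route as the paper's proof: reduce to showing $\normt{\blu^*}\lesssim\sqrt{d}$, combine optimality at $\blu^*$ with the $\tfrac{\lambda}{d}$-strong convexity of $R$, bound $\normt{\nabla R(\mathbf{0})}\lesssim 1/\sqrt{d}$ and the objective at $\blu=\mathbf{0}$ via Gaussian norm concentration and $\normt{\bltheta^*}\lesssim\sqrt{d}$, complete the square, and finish with the triangle inequality. The only (cosmetic) difference is that the paper first solves the inner maximization over $\tau$ in closed form to obtain a nonnegative loss term, whereas you keep the supremum, use $\tau=0$ for the lower bound $G(\blu^*)\ge R(\blu^*)$, and bound $G(\mathbf{0})$ through the scalar concave quadratic — an equivalent and, if anything, slightly cleaner handling of the constraint $\tau\in[0,R]$.
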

\begin{proof}
    First note $\normt*{\frac{1}{\sqrt{d}}(\blu^* \odot \blv - \bltheta^*)} \leq \frac{1}{\sqrt{d}} \normt{\blu^*} \norm{\blv}_\infty + \frac{1}{\sqrt{d}}\normt{\bltheta^*}$, and $\frac{1}{\sqrt{d}}\normt{\bltheta^*}$ is bounded by a constant with probability approaching $1$ by the assumed $\scrW_2$ convergence of $\bltheta^*$ to a fixed limit. Hence, it suffices to show that $\normt{\blu^*} \lesssim \sqrt{d}$ with high probability. We begin by noting that the inner maximization over $\tau$ admits a closed form solution, so the problem becomes 
    
    \[
    \min_{\blu \in \R^d} \parens*{\frac{\normt{\blh}}{2\sqrt{n}} \sqrt{\sigma^2 + \frac{1}{d}\normt*{\blu \odot \blv - \bltheta^*}^2} - \frac{1}{2\sqrt{nd}}  \blg^\top \parens*{\blu \odot \blv - \bltheta^*}}_+^2 + \frac{\lambda}{d}\normt*{\blu}^2 
    + \frac{\mu}{d}\sum_{i=1}^d h\parens{u_i, v_i^{(t)}, \theta^*_i}.
    \]
    Now, we can proceed similarly to in the proof of Lemma \ref{lem:compactP1}. Let
    \begin{align*}
        F(\blu) &\coloneqq \parens*{\frac{\normt{\blh}}{2\sqrt{n}} \sqrt{\sigma^2 + \frac{1}{d}\normt*{\blu \odot \blv - \bltheta^*}^2} - \frac{1}{2\sqrt{nd}}  \blg^\top \parens*{\blu \odot \blv - \bltheta^*}}_+^2,\\
        R(\blu) &\coloneqq \frac{\lambda}{d}\normt*{\blu}^2 
        + \frac{\mu}{d}\sum_{i=1}^d h\parens{u_i, v_i^{(t)}, \theta^*_i}.
    \end{align*}
    Then, noting $F$ is always non-negative and $R$ is $\frac{\lambda}{d}$ strongly-convex, we use the same argument as in Lemma \ref{lem:compactP1} to obtain the inequality
    \[
    \normt{\blu^*}^2 + \frac{2d}{\lambda}  \nabla R(\mathbf{0})^\top \blu^* \leq \frac{2d}{\lambda}F(\mathbf{0}).
    \]
    After completing the squares, we obtain
    \[
    \normt*{\blu^* + \frac{d}{\lambda}\nabla R(\mathbf{0})}^2 \leq \frac{2d}{\lambda}F(\mathbf{0})+\frac{d^2}{\lambda^2} \normt{\nabla R(\mathbf{0})}^2,
    \]
    so we can conclude
    \[
    \normt{\blu^*} \leq \frac{d}{\lambda}\normt{\nabla R(\mathbf{0})} + \sqrt{\frac{2d}{\lambda}F(\mathbf{0})+\frac{d^2}{\lambda^2} \normt{\nabla R(\mathbf{0})}^2} \leq  \frac{2d}{\lambda}\normt{\nabla R(\mathbf{0})} + \sqrt{\frac{2d}{\lambda}F(\mathbf{0})}.
    \]
    As shown in Lemma \ref{lem:compactP1}, $\normt{\nabla R(\mathbf{0})} \lesssim \frac{1}{\sqrt{d}}$ with probability approaching 1. It remains to show that $\sqrt{F(\mathbf{0})} \lesssim 1$ with probability approaching 1. To see this, observe that
    \begin{align*}
    \sqrt{F(\mathbf{0})} &= \parens*{\frac{\normt{\blh}}{2\sqrt{n}} \sqrt{\sigma^2 + \frac{1}{d}\normt*{\bltheta^*}^2} - \frac{1}{2\sqrt{nd}}  \blg^\top \bltheta^*}_+\\
    &\leq \abs*{\frac{\normt{\blh}}{2\sqrt{n}} \sqrt{\sigma^2 + \frac{1}{d}\normt*{\bltheta^*}^2} - \frac{1}{2\sqrt{nd}}  \blg^\top \bltheta^*}\\
    &\leq \frac{\normt{\blh}}{2\sqrt{n}}\sqrt{\sigma^2 + \frac{1}{d}\normt*{\bltheta^*}^2} + \frac{1}{2\sqrt{nd}}\normt{\blg}\normt{\bltheta^*},
    \end{align*}
    where the last line uses the triangle and Cauchy-Schwarz inequalities. By concentration of the norm for Gaussian vectors, there exists a universal constant $c$ such that $\frac{\normt{\blh}}{\sqrt{n}} \leq c$ and $\frac{\normt{\blg}}{\sqrt{n}} \leq c$ with probability approaching 1. Moreover, by Assumption \ref{assump:init}, $\frac{\normt{\bltheta^*}}{\sqrt{d}} \leq c$ with probability approaching 1. Hence, $\sqrt{F(\mathbf{0})} \lesssim 1$ with probability approaching 1. Substituting this into the bound for $\normt{\blu^*}$ from above completes the proof.
\end{proof}

\begin{lemma}\label{lem:limit}
Under Assumption \ref{assump:init}, the function $f_d(\tau, \beta)$ (Eq. \ref{eq:finitefunc}) converges pointwise in probability to $f(\tau, \beta)$ (Eq. \ref{eq:limitfunc}) as $d \to \infty$. 
\end{lemma}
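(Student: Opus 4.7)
The plan is to split $f_d(\tau,\beta)$ into two pieces: a ``scalar'' piece whose dependence on $\blv, \bltheta^*, \blg, \blh$ is only through $\xi = \normt{\blh}/\sqrt{n}$ and a handful of simple empirical averages, together with the empirical average of Moreau envelopes. Expanding $(\beta g_i\sqrt{\kappa} - \xi\theta_i^*)^2$ and collecting terms, the scalar piece equals
\[
\frac{\tau\sigma^2\xi}{\beta} + \tau\beta\xi - \tau^2 - \frac{\tau\beta\kappa}{\xi}\cdot\frac{1}{d}\sum_{i=1}^d g_i^2 + 2\tau\sqrt{\kappa}\cdot\frac{1}{d}\sum_{i=1}^d g_i\theta_i^*.
\]
Standard Gaussian concentration gives $\xi\overset{P}{\to}1$ and $\frac{1}{d}\sum g_i^2 \overset{P}{\to} 1$, while almost-sure boundedness of $\theta_i^*$ (Assumption~\ref{assump:init}) together with independence of $\blg$ from $\bltheta^*$ yields $\var(\frac{1}{d}\sum g_i\theta_i^*) = O(1/d)$ and hence $\frac{1}{d}\sum g_i\theta_i^*\overset{P}{\to}0$. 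Combining these limits produces $\frac{\tau\sigma^2}{\beta} + \tau\beta(1-\kappa) - \tau^2$ for the scalar piece, matching the deterministic portion of $f(\tau,\beta)$.

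For the Moreau envelope piece, define the ``$\xi = 1$'' analogue of each summand,
\[
\phi(v,\theta,g) = \scrM_{\lambda(\cdot)^2 + \mu h(\cdot,v,\theta)}\!\left(\frac{\theta - \beta g\sqrt{\kappa}}{v};\; \frac{\beta}{2\tau v^2}\right).
\]
The key a priori bound is obtained by plugging $u = 0$ into the defining minimization: $\scrM_\ell(x;\gamma)\leq \ell(0) + x^2/(2\gamma)$. With $\ell(0) = \mu h(0,v,\theta)$, $x = (\theta - \beta g\sqrt{\kappa})/v$, and $\gamma = \beta/(2\tau v^2)$, the troublesome $1/v^2$ cancels exactly and leaves the $v$-independent upper bound $|\phi(v,\theta,g)| \leq C(1 + \theta^2 + g^2)$. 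A matching lower bound $\phi \geq -C$ holds in both cases of interest ($h$ bounded, or $h(u,v,\theta)=u^2$), because $\ell(\cdot)$ is bounded below uniformly in $(v,\theta)$.

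With this bound in hand, the rest of the argument proceeds in two steps. First, I would replace the random $\xi$ inside the Moreau envelope by $1$, showing the resulting perturbation is $o_P(1)$ via joint continuity of $\scrM_\ell$ in input and step size together with $\xi\overset{P}{\to} 1$; uniform integrability is furnished by the $C(1 + \theta^2 + g^2)$ bound and the almost-sure boundedness of $\theta_i^*$. Second, I would show $\frac{1}{d}\sum_i \phi(v_i,\theta_i^*,g_i)\overset{P}{\to}\E\phi(V,\Theta,G)$ via the decomposition
\[
\frac{1}{d}\sum_i\parens*{\phi(v_i,\theta_i^*,g_i) - \E_G\phi(v_i,\theta_i^*,G)} + \frac{1}{d}\sum_i \E_G\phi(v_i,\theta_i^*,G).
\]
The first sum has variance $O(1/d)$: conditionally on $(\blv,\bltheta^*)$ its summands are independent with variance at most $C(1 + \theta_i^{*4})$, uniformly bounded by Assumption~\ref{assump:init}. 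The second sum converges to $\E\phi(V,\Theta,G)$ by applying the Wasserstein-2 convergence of the empirical distribution of $(v_i,\theta_i^*)$ to the deterministic test function $(v,\theta)\mapsto\E_G\phi(v,\theta,G)$, provided this test function lies in $\text{PL}(2)$.

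The main obstacle is verifying this $\text{PL}(2)$ property. The function $\phi$ nominally contains $1/v$ and $1/v^2$ factors that are singular at $v=0$, but the same cancellation that saved the a priori bound should persist at the level of gradients: using $\partial_x \scrM_\ell(x;\gamma) = (x - \text{prox}_\ell(x;\gamma))/\gamma$ together with bounds on $\text{prox}_\ell$ coming from $\lambda$-strong convexity of $\ell$, one computes $\partial_\theta \E_G\phi$ and $\partial_v \E_G\phi$ and finds that the offending $v$ factors cancel, leaving gradient bounds growing at most linearly in $(|\theta|,1)$. To dispose of any residual issue near $v = 0$, I plan a truncation argument: restrict to $\{|v_i|\geq \varepsilon\}$, apply $\scrW_2$ convergence there, and bound the contribution from $\{|v_i|<\varepsilon\}$ by the fraction of such indices, which tends to $\P(|V|<\varepsilon)\to 0$ as $\varepsilon\to 0$ by the a.s.\ nonvanishing of $V$ under $\Pi_t$.
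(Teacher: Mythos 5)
Your overall decomposition matches the paper's: you treat the scalar terms by $\xi\overset{P}{\to}1$ and a law-of-large-numbers/variance argument for $\frac1d\sum_i g_i^2$ and $\frac1d\sum_i g_i\theta_i^*$ (the paper does the same via the WLLN), and you isolate the empirical average of Moreau envelopes as the real work. For that term you diverge from the paper in both sub-steps. For the fluctuation over $\blg$, the paper rewrites the whole sum as a single $d$-dimensional Moreau envelope of $\frac{\bltheta^*}{\beta\sqrt{\kappa}}-\blg$ and invokes the concentration inequality of \cite{bosch2023random} for Moreau envelopes of a Gaussian plus a bounded vector; your conditional Chebyshev argument, using independence of the $g_i$ given $(\blv,\bltheta^*)$ together with the $u=0$ bound $\scrM_\ell(x;\gamma)\le\ell(0)+\frac{x^2}{2\gamma}$ (which indeed cancels the $1/v^2$), is more elementary and yields the same $O(1/d)$ rate, so that substitution is fine and arguably simpler.

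For the conditional-mean term, however, your main route is both unnecessary and the one genuine gap in the proposal. The paper never tries to show $(v,\theta)\mapsto\E_G q(v,\theta,G)$ is PL(2); it only shows this map is bounded on the relevant support --- using exactly the bounds you derived, $q\ge\min_u\{\lambda u^2+\mu h(u,v,\theta)\}$ and $\E_G q\le \mu h(0,v,\theta)+\frac{\tau}{\beta}\theta^2+\tau\beta\kappa$, together with the a.s.\ boundedness of $\Theta$ (this is precisely where Assumption \ref{assump:init} on boundedness enters) --- and then uses that $\scrW_2$ convergence implies weak convergence, hence convergence of expectations of bounded continuous test functions. Your planned gradient computation to establish PL(2), with claimed cancellations of the $1/v$ factors, is asserted rather than carried out, and the apparent singularity at $v=0$ is an artifact of the Moreau-envelope parametrization anyway: in the original variables $q(v,\theta,g)=\min_u\{\lambda u^2+\mu h(u,v,\theta)+\frac{\tau}{\beta}(uv-\theta+\beta\sqrt{\kappa}g)^2\}$, which is continuous at $v=0$, so no truncation near $v=0$ is needed (and your truncation sketch would anyway require choosing $\varepsilon$ with $\P(|V|=\varepsilon)=0$ and would ride on the same uniform bound). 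If you replace the PL(2)/truncation step by the bounded-continuous argument that your own upper bound already furnishes, your proof closes and is essentially a mildly more self-contained variant of the paper's. One last remark: your ``step 1'' replacing $\xi$ by $1$ inside the envelope is treated informally, but the paper is equally brief there, so that is not a point of difference.
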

\begin{proof}
We consider the limit of each term in $f_d$ separately. The limit of the terms $\frac{\tau\sigma^2\xi}{\beta}$ and $\tau\beta\xi$ is found by noting that $\xi \to 1$ in probability by Gaussian Lipschitz concentration.

The first summation term simplifies as follows:
\begin{align*}
     \frac{1}{d}\sum_{i=1}^d \brackets*{\frac{\tau \xi \theta_i^{*2}}{\beta} - \frac{\tau}{\beta \xi}\parens*{\beta g_i \sqrt{\kappa} - \xi \theta_i^*}^2} &= \frac{1}{d} \sum_{i=1}^d \brackets*{-\frac{\tau}{\beta \xi}\parens*{\beta^2 g_i^2 \kappa - 2\beta g_i \sqrt{\kappa}\xi \theta_i^*}} \\
     &= -\frac{\tau}{\beta\xi}\frac{1}{d}\sum_{i=1}^d \brackets*{\beta^2g_i^2\kappa - 2\beta g_i \sqrt{\kappa}\xi \theta^*_i} \overset{P}{\to} -\tau\beta\kappa,
\end{align*}
where the last line follows from the weak law of large numbers since the $g_i$ are i.i.d. standard Gaussian variables.  

For the last term, after again using the fact that $\xi \overset{P}{\to} 1$, we need to consider
\[
\frac{1}{d}\sum_{i = 1}^d \brackets*{\scrM_{\lambda (\cdot)^2 + \mu h(\cdot, v_i^{(t)}, \theta^*_i)}\parens*{\frac{\theta^*_i - \beta g_i \sqrt{\kappa}}{v_i^{(t)}} ; \frac{\beta}{2\tau v_i^{(t)2}}}} =\colon \frac{1}{d}\sum_{i=1}^d q(v_i, \theta^*_i, g_i)
\]
To show convergence in probability of this term, first fix $\delta > 0$. Then, we want to show
\[
\P\brackets*{\abs*{\frac{1}{d}\sum_{i=1}^d q(v_i, \theta^*_i, g_i) - \E q(V,\Theta,G)} > \delta} \to 0,
\]
where the expectation is over $(V,\Theta)\sim \Pi_t$ and $G \sim \scrN(0,1)$. It suffices to show the following two statements:
\begin{align}\label{eq:conv1}
&\P\brackets*{\abs*{\frac{1}{d}\sum_{i=1}^d q(v_i, \theta^*_i, g_i) - \E_{\blg}\frac{1}{d}\sum_{i=1}^d q(v_i, \theta^*_i, g_i)} > \frac{\delta}{2}} \to 0,\\
&\P\brackets*{\abs*{\E_{\blg}\frac{1}{d}\sum_{i=1}^d q(v_i, \theta^*_i, g_i) - \E q(V,\Theta,G)} > \frac{\delta}{2}} \to 0.\label{eq:conv2}
\end{align}
To show (\ref{eq:conv1}), we rely on a concentration inequality for the Moreau envelope of a Gaussian vector plus a bounded vector from \cite{bosch2023random}. First note that 
\begin{align*}
\frac{1}{d}\sum_{i=1}^d q(v_i, \theta^*_i, g_i) &= \frac{1}{d}\min_{\blu \in \R^d} \braces*{\lambda\normt{\blu}^2 + \mu\sum_{i=1}^d h(u_i, v_i, \theta^*_i) + \frac{\tau}{\beta} \normt*{\blu \odot \blv - \bltheta^* + \beta \sqrt{\kappa} \blg}^2}\\
&= \frac{1}{d}\min_{\blu \in \R^d} \braces*{\lambda\normt{\blu}^2 + \mu\sum_{i=1}^d h(u_i, v_i, \theta^*_i) + \tau \beta \kappa \normt*{\frac{\blu \odot \blv}{\beta \sqrt{\kappa}} - \frac{\bltheta^*}{\beta \sqrt{\kappa}} + \blg}^2}\\
&= \frac{1}{d}\min_{\bltheta \in \R^d} \braces*{\lambda\normt*{\frac{\beta\sqrt{\kappa}\bltheta}{\blv}}^2 + \mu\sum_{i=1}^d h\parens*{\frac{\beta\sqrt{\kappa}\theta_i}{v_i}, v_i, \theta^*_i} + \tau \beta \kappa \normt*{\bltheta - \frac{\bltheta^*}{\beta \sqrt{\kappa}} + \blg}^2}\\
&= \frac{1}{d} \scrM_\ell\parens*{\frac{\bltheta^*}{\beta \sqrt{\kappa}} - \blg; \frac{1}{2\tau\beta\kappa}},
\end{align*}
where the second to last line follows from the change of variable $\bltheta = \frac{\blu \odot \blv}{\beta \sqrt{\kappa}}$, and $\ell(\bltheta) \coloneqq \lambda\normt*{\frac{\beta\sqrt{\kappa}\bltheta}{\blv}}^2 + \mu\sum_{i=1}^d h\parens*{\frac{\beta\sqrt{\kappa}\theta_i}{v_i}, v_i, \theta^*_i}$. Here, for fixed $\blv$, $\ell$ is a proper convex function of $\bltheta$. Moreover, by Assumption \ref{assump:init}, $\frac{\bltheta^*}{\beta\sqrt{\kappa}}$ has norm of order $\sqrt{d}$ with high probability. Hence, by \cite[Lemma 8]{bosch2023random}, this quantity concentrates around its expectation (with respect to $\blg$), and we can conclude that there is some $c>0$ such that
\[
\P\brackets*{\abs*{\frac{1}{d}\sum_{i=1}^d q(v_i, \theta^*_i, g_i) - \E_{\blg}\frac{1}{d}\sum_{i=1}^d q(v_i, \theta^*_i, g_i)} > \frac{\delta}{2}} \leq \frac{c\tau^2 \beta^2 \kappa^2}{d\delta^2} \to 0.
\]
To show (\ref{eq:conv2}), note that 
\[
\E_{\blg}\frac{1}{d}\sum_{i=1}^d q(v_i, \theta^*_i, g_i) = \frac{1}{d}\sum_{i=1}^d \E_G q(v_i, \theta^*_i, G).
\]
Observe that this quantity is an expectation with respect to the joint empirical distribution of $(\blv, \bltheta^*)$. By the assumption of $\scrW_2$ convergence of the empirical distribution of $(\blv, \bltheta^*)$ (Assumption \ref{assump:init}), if we can show that mapping $(v, \theta) \to \E_G q(v, \theta, G)$ is bounded, then we can conclude that the above quantity converges in probability to $\E q(V,\Theta,G)$, with $(V,\Theta) \sim \Pi_t$. To see this, recall that for all $G\in \R$, $q$ is bounded below as
\[
q(v,\theta,G) \geq \min_u \lambda u^2 + \mu h(u, v, \theta),
\]
which is always bounded below since $h$ is bounded (or, in the case $h = u^2$, the lower bound is zero). Next, for a given $G$, we can bound $q$ above as
\[
q(v,\theta,G) \leq \mu h(0, v, \theta) + \frac{\tau}{\beta}(\theta - \beta \sqrt{\kappa}G)^2.
\]
Hence,
\[
\E_Gq(v,\theta,G) \leq \mu h(0,v,\theta) + \frac{\tau}{\beta}\theta^2 + \tau\beta\kappa < C
\]
for some universal constant $C>0$, since $\theta$ is bounded by assumption and $h(0,v,\theta)$ is either bounded above or equal to $0$ (in the case where $h = u^2$). Combining (\ref{eq:conv1}) and $(\ref{eq:conv2})$ yields the desired result.
\end{proof}

\section{Solving the min-max problem}\label{app:solution}
Below, we show that the max-min problems (\ref{eq:recursion}) and (\ref{eq:group-recursion}) have easily computable solutions. Note it suffices to consider the grouped case (\ref{eq:group-recursion}), since we can apply it with $b=1$ to recover the ungrouped case. The following derivation closely follows the analysis of the scalar max-min problem in \cite{chang2021provable}, who study a similar scalar problem (albeit in the case of $\lambda =0$, which we do not consider). 

First recall that, as shown in the proof of Theorem \ref{thm:main}, there exists a unique saddle point, due to the strong convexity in $\tau$ and strict convexity in $\beta$. Now, taking derivatives with respect to $\tau$ and $\beta$, we obtain the following saddle point conditions:
\begin{align}\label{eq:optimcond}
    0 &= \frac{\sigma^2}{\beta} + \beta(1-\kappa) - 2\tau + \beta\lambda^2 \E\brackets*{\frac{1}{b}\sum_{i=1}^b \frac{\Theta_i^2 + \beta^2 \kappa}{(\tau V_i^2 + \beta \lambda)^2}},\\
    0 &= -\frac{\tau\sigma^2}{\beta^2} + \tau(1-\kappa) + \tau\lambda \E\brackets*{\frac{1}{b}\sum_{i=1}^b\frac{1}{\tau V_i^2 + \beta \lambda}} - \tau\lambda^2 \E\brackets*{\frac{1}{b}\sum_{i=1}^b \frac{\Theta_i^2 + \beta^2 \kappa}{(\tau V_i^2 + \beta \lambda)^2}}.
\end{align}
Solving each of these equations for the quantity $\beta^2\lambda^2 \E\brackets*{\frac{1}{b}\sum_{i=1}^b \frac{\Theta_i^2 + \beta^2 \kappa}{(\tau V_i^2 + \beta \lambda)^2}}$ and then equating the two, we arrive at
\[
2\tau\beta - \beta^2(1-\kappa) - \sigma^2 = \beta^2(1-\kappa) - \sigma^2 + 2\lambda \beta^3\kappa \E\brackets*{\frac{1}{b}\sum_{i=1}^b\frac{1}{\tau V_i^2 + \beta \lambda}},
\]
which implies
\[
\tau = \beta(1-\kappa) + \lambda \beta^2\kappa \E\brackets*{\frac{1}{b}\sum_{i=1}^b\frac{1}{\tau V_i^2 + \beta \lambda}}.
\]
Defining the auxiliary variable $\gamma = \tau/\beta$, this yields the fixed point equation
\begin{align}\label{eq:fixedpoint}
\gamma = 1-\kappa + \lambda \kappa \E\brackets*{\frac{1}{b}\sum_{i=1}^b \frac{1}{\gamma V_i^2 + \lambda}}.
\end{align}
Substituting this $\gamma$ back into the first optimality condition in (\ref{eq:optimcond}), we can express the optimal $\beta$ in closed-form, in terms of $\gamma$:
\[
\beta = \sqrt{\frac{\sigma^2 + \lambda^2\E\brackets*{\frac{1}{b}\sum_{i=1}^b \frac{\Theta_i^2}{(\gamma V_i^2 + \lambda)^2}}}{2\gamma + \kappa - 1 - \lambda^2\kappa\E\brackets*{\frac{1}{b}\sum_{i=1}^b \frac{1}{(\gamma V_i^2 + \lambda)^2}}}}.
\]
Finally, the optimal $\tau$ can be found simply as $\tau = \gamma\beta$. 

This yields a simple recipe for solving the min-max problem. First, compute the positive solution $\gammahat$ to the fixed point equation (\ref{eq:fixedpoint}) (this can be found easily using standard numerical solvers). Then, $(\betahat, \tauhat)$ are both given in closed-form as functions of $\gammahat$ (where the required expectations can all be approximated via Monte Carlo simulation).

\section{Further simulations}\label{app:moresims}

In this section, we demonstrate that our asymptotic predictions can provide accurate estimates of the test error, even when some of our technical assumptions are not satisfied.

First, we compare the two ``heavier'' weightings considered in Section \ref{sec:sparsereg}, $\psi(u,v) = \tanh|uv|$ and $\psi(u,v) = \tanh{u^2}$, to the same weightings without the bounded tanh activation: $\psi(u,v) = |uv|$ and $\psi(u,v) = u^2$. We note that the reweighting choice $|uv|$ is considered in \cite{mohan2012iterative, radhakrishnan2024linear} as a limit as $p \to 0$ of the classical IRLS update for $\ell_p$ minimization. In Figure \ref{fig:notanh}a, we consider the same sparse regression as in Section \ref{sec:sparsereg}, i.e., with $n=250, d=2000, \sigma = 0.1, \theta^*_i \simiid \text{Bernoulli}(0.01)$ and $\lambda$ chosen to minimize the predicted asymptotic loss. 

For each choice of $\psi$, we apply the theoretical predictions of Theorem \ref{thm:main}, even if $\psi$ violates Assumption \ref{assump:psi}. We find that our predictions remain accurate for all these choices of $\psi$. The choice $\tanh|uv|$ performs almost identically without the tanh activation. Interestingly, the choice $\psi = \tanh{u^2}$ outperforms the variant without the tanh and has a more regular decay of the test loss. 

In Figure \ref{fig:squaredloss}, we apply Theorem $\ref{thm:main}$ to predict the asymptotic squared test loss: $\frac{1}{d}\normt{\blu \odot \blv - \bltheta^*}^2 $ at each iteration. While this function is not $PL(2)$, as required by the theorem, the asymptotic predictions still align well with simulations. Extending our technical results to hold formally in such scenarios is an interesting direction for future work.

\begin{figure}[t]
        \captionsetup[subfigure]{oneside,margin={1.5cm,0cm}}
        \begin{subfigure}[b]{5.5cm}
        \centering
        \input{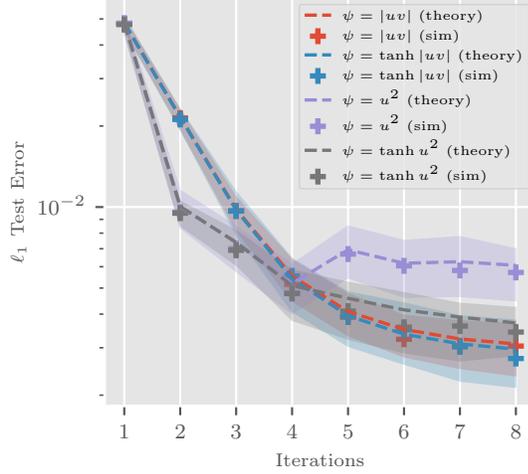}
        \caption{Predictions for $\psi$ which are not uniformly bounded}
        \label{fig:notanh}
     \end{subfigure}
    \hspace{14mm}
    \begin{subfigure}[b]{5.5cm}
        \centering
        \input{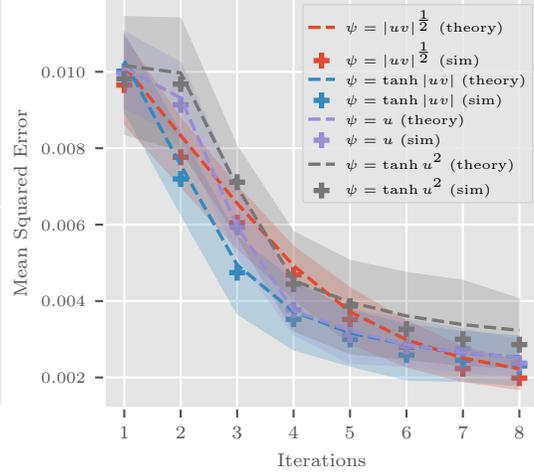}
        \caption{Predictions for the squared test loss}
        \label{fig:squaredloss}
    \end{subfigure}
    \caption{Here, we fix $n=250, d=2000, \sigma = 0.1, \theta^*_i \simiid \text{Bernoulli}(0.01)$ and select $\lambda$ to minimize the predicted asymptotic loss. Plus marks denote the median over 100 trials, and the shaded region indicates the interquartile range. Left: Predictions and simulations for weighting functions which are not uniformly bounded. Right: Predictions and simulations for the squared error $\frac{1}{d}\normt{\blu \odot \blv - \bltheta^*}^2$}
    \label{fig:notanh}
    \vspace{20cm}
\end{figure}

\ifarxiv

\else
%%%%%%%%%%%%%%%%%%%%%%%%%%%%%%%%%%%%%%%%%%%%%%%%%%%%%%%%%%%%

\clearpage
\section*{NeurIPS Paper Checklist}

%%% BEGIN INSTRUCTIONS %%%
The checklist is designed to encourage best practices for responsible machine learning research, addressing issues of reproducibility, transparency, research ethics, and societal impact. Do not remove the checklist: {\bf The papers not including the checklist will be desk rejected.} The checklist should follow the references and follow the (optional) supplemental material.  The checklist does NOT count towards the page
limit. 

Please read the checklist guidelines carefully for information on how to answer these questions. For each question in the checklist:
\begin{itemize}
    \item You should answer \answerYes{}, \answerNo{}, or \answerNA{}.
    \item \answerNA{} means either that the question is Not Applicable for that particular paper or the relevant information is Not Available.
    \item Please provide a short (1–2 sentence) justification right after your answer (even for NA). 
   % \item {\bf The papers not including the checklist will be desk rejected.}
\end{itemize}

{\bf The checklist answers are an integral part of your paper submission.} They are visible to the reviewers, area chairs, senior area chairs, and ethics reviewers. You will be asked to also include it (after eventual revisions) with the final version of your paper, and its final version will be published with the paper.

The reviewers of your paper will be asked to use the checklist as one of the factors in their evaluation. While "\answerYes{}" is generally preferable to "\answerNo{}", it is perfectly acceptable to answer "\answerNo{}" provided a proper justification is given (e.g., "error bars are not reported because it would be too computationally expensive" or "we were unable to find the license for the dataset we used"). In general, answering "\answerNo{}" or "\answerNA{}" is not grounds for rejection. While the questions are phrased in a binary way, we acknowledge that the true answer is often more nuanced, so please just use your best judgment and write a justification to elaborate. All supporting evidence can appear either in the main paper or the supplemental material, provided in appendix. If you answer \answerYes{} to a question, in the justification please point to the section(s) where related material for the question can be found.

% IMPORTANT, please:
% \begin{itemize}
%     \item {\bf Delete this instruction block, but keep the section heading ``NeurIPS paper checklist"},
%     \item  {\bf Keep the checklist subsection headings, questions/answers and guidelines below.}
%     \item {\bf Do not modify the questions and only use the provided macros for your answers}.
% \end{itemize} 

%%% END INSTRUCTIONS %%%

\begin{enumerate}

\item {\bf Claims}
    \item[] Question: Do the main claims made in the abstract and introduction accurately reflect the paper's contributions and scope?
    \item[] Answer: \answerYes{} % Replace by \answerYes{}, \answerNo{}, or \answerNA{}.
    \item[] Justification: Theorems \ref{thm:main} and \ref{thm:group} directly reflect the claims from the abstract/introduction.
    \item[] Guidelines:
    \begin{itemize}
        \item The answer NA means that the abstract and introduction do not include the claims made in the paper.
        \item The abstract and/or introduction should clearly state the claims made, including the contributions made in the paper and important assumptions and limitations. A No or NA answer to this question will not be perceived well by the reviewers. 
        \item The claims made should match theoretical and experimental results, and reflect how much the results can be expected to generalize to other settings. 
        \item It is fine to include aspirational goals as motivation as long as it is clear that these goals are not attained by the paper. 
    \end{itemize}

\item {\bf Limitations}
    \item[] Question: Does the paper discuss the limitations of the work performed by the authors?
    \item[] Answer: \answerYes{} % Replace by \answerYes{}, \answerNo{}, or \answerNA{}.
    \item[] Justification: We provide detailed discussion of the strength of our technical assumptions after the statement of Theorem \ref{thm:main} and in Appendix \ref{app:moresims}, and we further elaborate on these areas for improvement in the conclusion.
    \item[] Guidelines:
    \begin{itemize}
        \item The answer NA means that the paper has no limitation while the answer No means that the paper has limitations, but those are not discussed in the paper. 
        \item The authors are encouraged to create a separate "Limitations" section in their paper.
        \item The paper should point out any strong assumptions and how robust the results are to violations of these assumptions (e.g., independence assumptions, noiseless settings, model well-specification, asymptotic approximations only holding locally). The authors should reflect on how these assumptions might be violated in practice and what the implications would be.
        \item The authors should reflect on the scope of the claims made, e.g., if the approach was only tested on a few datasets or with a few runs. In general, empirical results often depend on implicit assumptions, which should be articulated.
        \item The authors should reflect on the factors that influence the performance of the approach. For example, a facial recognition algorithm may perform poorly when image resolution is low or images are taken in low lighting. Or a speech-to-text system might not be used reliably to provide closed captions for online lectures because it fails to handle technical jargon.
        \item The authors should discuss the computational efficiency of the proposed algorithms and how they scale with dataset size.
        \item If applicable, the authors should discuss possible limitations of their approach to address problems of privacy and fairness.
        \item While the authors might fear that complete honesty about limitations might be used by reviewers as grounds for rejection, a worse outcome might be that reviewers discover limitations that aren't acknowledged in the paper. The authors should use their best judgment and recognize that individual actions in favor of transparency play an important role in developing norms that preserve the integrity of the community. Reviewers will be specifically instructed to not penalize honesty concerning limitations.
    \end{itemize}

\item {\bf Theory Assumptions and Proofs}
    \item[] Question: For each theoretical result, does the paper provide the full set of assumptions and a complete (and correct) proof?
    \item[] Answer: \answerYes{} % Replace by \answerYes{}, \answerNo{}, or \answerNA{}.
    \item[] Justification: We formally state and discuss Assumptions \ref{assump:init} and \ref{assump:psi}, and we provide a complete proof of our results in the Appendix \ref{app:proofs} of the supplemental material.
    \item[] Guidelines:
    \begin{itemize}
        \item The answer NA means that the paper does not include theoretical results. 
        \item All the theorems, formulas, and proofs in the paper should be numbered and cross-referenced.
        \item All assumptions should be clearly stated or referenced in the statement of any theorems.
        \item The proofs can either appear in the main paper or the supplemental material, but if they appear in the supplemental material, the authors are encouraged to provide a short proof sketch to provide intuition. 
        \item Inversely, any informal proof provided in the core of the paper should be complemented by formal proofs provided in appendix or supplemental material.
        \item Theorems and Lemmas that the proof relies upon should be properly referenced. 
    \end{itemize}

    \item {\bf Experimental Result Reproducibility}
    \item[] Question: Does the paper fully disclose all the information needed to reproduce the main experimental results of the paper to the extent that it affects the main claims and/or conclusions of the paper (regardless of whether the code and data are provided or not)?
    \item[] Answer: \answerYes{} % Replace by \answerYes{}, \answerNo{}, or \answerNA{}.
    \item[] Justification: We provide all details of hyperparameters used to run simulations in the corresponding figure caption. We also detail the exact method used to compute our asymptotic predictions in Appendix \ref{app:solution} and provide accompanying code for the simulations.
    \item[] Guidelines:
    \begin{itemize}
        \item The answer NA means that the paper does not include experiments.
        \item If the paper includes experiments, a No answer to this question will not be perceived well by the reviewers: Making the paper reproducible is important, regardless of whether the code and data are provided or not.
        \item If the contribution is a dataset and/or model, the authors should describe the steps taken to make their results reproducible or verifiable. 
        \item Depending on the contribution, reproducibility can be accomplished in various ways. For example, if the contribution is a novel architecture, describing the architecture fully might suffice, or if the contribution is a specific model and empirical evaluation, it may be necessary to either make it possible for others to replicate the model with the same dataset, or provide access to the model. In general. releasing code and data is often one good way to accomplish this, but reproducibility can also be provided via detailed instructions for how to replicate the results, access to a hosted model (e.g., in the case of a large language model), releasing of a model checkpoint, or other means that are appropriate to the research performed.
        \item While NeurIPS does not require releasing code, the conference does require all submissions to provide some reasonable avenue for reproducibility, which may depend on the nature of the contribution. For example
        \begin{enumerate}
            \item If the contribution is primarily a new algorithm, the paper should make it clear how to reproduce that algorithm.
            \item If the contribution is primarily a new model architecture, the paper should describe the architecture clearly and fully.
            \item If the contribution is a new model (e.g., a large language model), then there should either be a way to access this model for reproducing the results or a way to reproduce the model (e.g., with an open-source dataset or instructions for how to construct the dataset).
            \item We recognize that reproducibility may be tricky in some cases, in which case authors are welcome to describe the particular way they provide for reproducibility. In the case of closed-source models, it may be that access to the model is limited in some way (e.g., to registered users), but it should be possible for other researchers to have some path to reproducing or verifying the results.
        \end{enumerate}
    \end{itemize}

\item {\bf Open access to data and code}
    \item[] Question: Does the paper provide open access to the data and code, with sufficient instructions to faithfully reproduce the main experimental results, as described in supplemental material?
    \item[] Answer: \answerYes{}{} % Replace by \answerYes{}, \answerNo{}, or \answerNA{}.
    \item[] Justification: We provide an accompanying file with code for computing the asymptotic predictions and running simulations with high-dimensional Gaussian data.
    \item[] Guidelines:
    \begin{itemize}
        \item The answer NA means that paper does not include experiments requiring code.
        \item Please see the NeurIPS code and data submission guidelines (\url{https://nips.cc/public/guides/CodeSubmissionPolicy}) for more details.
        \item While we encourage the release of code and data, we understand that this might not be possible, so “No” is an acceptable answer. Papers cannot be rejected simply for not including code, unless this is central to the contribution (e.g., for a new open-source benchmark).
        \item The instructions should contain the exact command and environment needed to run to reproduce the results. See the NeurIPS code and data submission guidelines (\url{https://nips.cc/public/guides/CodeSubmissionPolicy}) for more details.
        \item The authors should provide instructions on data access and preparation, including how to access the raw data, preprocessed data, intermediate data, and generated data, etc.
        \item The authors should provide scripts to reproduce all experimental results for the new proposed method and baselines. If only a subset of experiments are reproducible, they should state which ones are omitted from the script and why.
        \item At submission time, to preserve anonymity, the authors should release anonymized versions (if applicable).
        \item Providing as much information as possible in supplemental material (appended to the paper) is recommended, but including URLs to data and code is permitted.
    \end{itemize}

\item {\bf Experimental Setting/Details}
    \item[] Question: Does the paper specify all the training and test details (e.g., data splits, hyperparameters, how they were chosen, type of optimizer, etc.) necessary to understand the results?
    \item[] Answer: \answerYes{} % Replace by \answerYes{}, \answerNo{}, or \answerNA{}.
    \item[] Justification: For each simulation, hyperparameter choices (and the method for choosing the ridge parameter $\lambda$) are specified either in the caption or in the accompanying discussion in the text.
    \item[] Guidelines:
    \begin{itemize}
        \item The answer NA means that the paper does not include experiments.
        \item The experimental setting should be presented in the core of the paper to a level of detail that is necessary to appreciate the results and make sense of them.
        \item The full details can be provided either with the code, in appendix, or as supplemental material.
    \end{itemize}

\item {\bf Experiment Statistical Significance}
    \item[] Question: Does the paper report error bars suitably and correctly defined or other appropriate information about the statistical significance of the experiments?
    \item[] Answer: \answerYes{} % Replace by \answerYes{}, \answerNo{}, or \answerNA{}.
    \item[] Justification: We report the median and interquartile range for all simulations over 100 independent trials. These metrics are chosen due to the asymmetry of the distribution across trials (e.g., the mean minus the standard deviation might be negative in low-noise settings). 
    \item[] Guidelines:
    \begin{itemize}
        \item The answer NA means that the paper does not include experiments.
        \item The authors should answer "Yes" if the results are accompanied by error bars, confidence intervals, or statistical significance tests, at least for the experiments that support the main claims of the paper.
        \item The factors of variability that the error bars are capturing should be clearly stated (for example, train/test split, initialization, random drawing of some parameter, or overall run with given experimental conditions).
        \item The method for calculating the error bars should be explained (closed form formula, call to a library function, bootstrap, etc.)
        \item The assumptions made should be given (e.g., Normally distributed errors).
        \item It should be clear whether the error bar is the standard deviation or the standard error of the mean.
        \item It is OK to report 1-sigma error bars, but one should state it. The authors should preferably report a 2-sigma error bar than state that they have a 96\% CI, if the hypothesis of Normality of errors is not verified.
        \item For asymmetric distributions, the authors should be careful not to show in tables or figures symmetric error bars that would yield results that are out of range (e.g. negative error rates).
        \item If error bars are reported in tables or plots, The authors should explain in the text how they were calculated and reference the corresponding figures or tables in the text.
    \end{itemize}

\item {\bf Experiments Compute Resources}
    \item[] Question: For each experiment, does the paper provide sufficient information on the computer resources (type of compute workers, memory, time of execution) needed to reproduce the experiments?
    \item[] Answer: \answerNA{} % Replace by \answerYes{}, \answerNo{}, or \answerNA{}.
    \item[] Justification: Due to the small-scale of our included simulations, we do not include this information. 
    \item[] Guidelines:
    \begin{itemize}
        \item The answer NA means that the paper does not include experiments.
        \item The paper should indicate the type of compute workers CPU or GPU, internal cluster, or cloud provider, including relevant memory and storage.
        \item The paper should provide the amount of compute required for each of the individual experimental runs as well as estimate the total compute. 
        \item The paper should disclose whether the full research project required more compute than the experiments reported in the paper (e.g., preliminary or failed experiments that didn't make it into the paper). 
    \end{itemize}
    
\item {\bf Code Of Ethics}
    \item[] Question: Does the research conducted in the paper conform, in every respect, with the NeurIPS Code of Ethics \url{https://neurips.cc/public/EthicsGuidelines}?
    \item[] Answer: \answerYes{} % Replace by \answerYes{}, \answerNo{}, or \answerNA{}.
    \item[] Justification: The research presented in this work abides by the Code of Ethics.
    \item[] Guidelines:
    \begin{itemize}
        \item The answer NA means that the authors have not reviewed the NeurIPS Code of Ethics.
        \item If the authors answer No, they should explain the special circumstances that require a deviation from the Code of Ethics.
        \item The authors should make sure to preserve anonymity (e.g., if there is a special consideration due to laws or regulations in their jurisdiction).
    \end{itemize}

\item {\bf Broader Impacts}
    \item[] Question: Does the paper discuss both potential positive societal impacts and negative societal impacts of the work performed?
    \item[] Answer: \answerNA{} % Replace by \answerYes{}, \answerNo{}, or \answerNA{}.
    \item[] Justification: Our work provides statistical analysis for a generic family of algorithms in an abstract setup, and we do not believe our work has immediate social impacts or an immediate path toward such impacts.
    \item[] Guidelines:
    \begin{itemize}
        \item The answer NA means that there is no societal impact of the work performed.
        \item If the authors answer NA or No, they should explain why their work has no societal impact or why the paper does not address societal impact.
        \item Examples of negative societal impacts include potential malicious or unintended uses (e.g., disinformation, generating fake profiles, surveillance), fairness considerations (e.g., deployment of technologies that could make decisions that unfairly impact specific groups), privacy considerations, and security considerations.
        \item The conference expects that many papers will be foundational research and not tied to particular applications, let alone deployments. However, if there is a direct path to any negative applications, the authors should point it out. For example, it is legitimate to point out that an improvement in the quality of generative models could be used to generate deepfakes for disinformation. On the other hand, it is not needed to point out that a generic algorithm for optimizing neural networks could enable people to train models that generate Deepfakes faster.
        \item The authors should consider possible harms that could arise when the technology is being used as intended and functioning correctly, harms that could arise when the technology is being used as intended but gives incorrect results, and harms following from (intentional or unintentional) misuse of the technology.
        \item If there are negative societal impacts, the authors could also discuss possible mitigation strategies (e.g., gated release of models, providing defenses in addition to attacks, mechanisms for monitoring misuse, mechanisms to monitor how a system learns from feedback over time, improving the efficiency and accessibility of ML).
    \end{itemize}
    
\item {\bf Safeguards}
    \item[] Question: Does the paper describe safeguards that have been put in place for responsible release of data or models that have a high risk for misuse (e.g., pretrained language models, image generators, or scraped datasets)?
    \item[] Answer: \answerYes{} % Replace by \answerYes{}, \answerNo{}, or \answerNA{}.
    \item[] Justification: The paper poses no such risks.
    \item[] Guidelines:
    \begin{itemize}
        \item The answer NA means that the paper poses no such risks.
        \item Released models that have a high risk for misuse or dual-use should be released with necessary safeguards to allow for controlled use of the model, for example by requiring that users adhere to usage guidelines or restrictions to access the model or implementing safety filters. 
        \item Datasets that have been scraped from the Internet could pose safety risks. The authors should describe how they avoided releasing unsafe images.
        \item We recognize that providing effective safeguards is challenging, and many papers do not require this, but we encourage authors to take this into account and make a best faith effort.
    \end{itemize}

\item {\bf Licenses for existing assets}
    \item[] Question: Are the creators or original owners of assets (e.g., code, data, models), used in the paper, properly credited and are the license and terms of use explicitly mentioned and properly respected?
    \item[] Answer: \answerNA{} % Replace by \answerYes{}, \answerNo{}, or \answerNA{}.
    \item[] Justification: The paper does not use existing assets.
    \item[] Guidelines:
    \begin{itemize}
        \item The answer NA means that the paper does not use existing assets.
        \item The authors should cite the original paper that produced the code package or dataset.
        \item The authors should state which version of the asset is used and, if possible, include a URL.
        \item The name of the license (e.g., CC-BY 4.0) should be included for each asset.
        \item For scraped data from a particular source (e.g., website), the copyright and terms of service of that source should be provided.
        \item If assets are released, the license, copyright information, and terms of use in the package should be provided. For popular datasets, \url{paperswithcode.com/datasets} has curated licenses for some datasets. Their licensing guide can help determine the license of a dataset.
        \item For existing datasets that are re-packaged, both the original license and the license of the derived asset (if it has changed) should be provided.
        \item If this information is not available online, the authors are encouraged to reach out to the asset's creators.
    \end{itemize}

\item {\bf New Assets}
    \item[] Question: Are new assets introduced in the paper well documented and is the documentation provided alongside the assets?
    \item[] Answer: \answerYes{} % Replace by \answerYes{}, \answerNo{}, or \answerNA{}.
    \item[] Justification: The provided code includes all necessary information for running simulations.
    \item[] Guidelines:
    \begin{itemize}
        \item The answer NA means that the paper does not release new assets.
        \item Researchers should communicate the details of the dataset/code/model as part of their submissions via structured templates. This includes details about training, license, limitations, etc. 
        \item The paper should discuss whether and how consent was obtained from people whose asset is used.
        \item At submission time, remember to anonymize your assets (if applicable). You can either create an anonymized URL or include an anonymized zip file.
    \end{itemize}

\item {\bf Crowdsourcing and Research with Human Subjects}
    \item[] Question: For crowdsourcing experiments and research with human subjects, does the paper include the full text of instructions given to participants and screenshots, if applicable, as well as details about compensation (if any)? 
    \item[] Answer: \answerNA{} % Replace by \answerYes{}, \answerNo{}, or \answerNA{}.
    \item[] Justification: The paper does not involve research with human subjects or crowdsourcing.
    \item[] Guidelines:
    \begin{itemize}
        \item The answer NA means that the paper does not involve crowdsourcing nor research with human subjects.
        \item Including this information in the supplemental material is fine, but if the main contribution of the paper involves human subjects, then as much detail as possible should be included in the main paper. 
        \item According to the NeurIPS Code of Ethics, workers involved in data collection, curation, or other labor should be paid at least the minimum wage in the country of the data collector. 
    \end{itemize}

\item {\bf Institutional Review Board (IRB) Approvals or Equivalent for Research with Human Subjects}
    \item[] Question: Does the paper describe potential risks incurred by study participants, whether such risks were disclosed to the subjects, and whether Institutional Review Board (IRB) approvals (or an equivalent approval/review based on the requirements of your country or institution) were obtained?
    \item[] Answer: \answerNA{} % Replace by \answerYes{}, \answerNo{}, or \answerNA{}.
    \item[] Justification: The paper does not involve research with human subjects or crowdsourcing.
    \item[] Guidelines:
    \begin{itemize}
        \item The answer NA means that the paper does not involve crowdsourcing nor research with human subjects.
        \item Depending on the country in which research is conducted, IRB approval (or equivalent) may be required for any human subjects research. If you obtained IRB approval, you should clearly state this in the paper. 
        \item We recognize that the procedures for this may vary significantly between institutions and locations, and we expect authors to adhere to the NeurIPS Code of Ethics and the guidelines for their institution. 
        \item For initial submissions, do not include any information that would break anonymity (if applicable), such as the institution conducting the review.
    \end{itemize}

\end{enumerate}
\fi

\end{document}